\journal{}
\begin{document}
	
	\begin{frontmatter}
		
		\title{A Framework for 3D Tracking of Frontal Dynamic Objects in Autonomous Cars}

		%% Group authors per affiliation:
		\author{Faraz Lotfi\tnoteref{Corresponding author}, Hamid D. Taghirad}
		\address{Advanced Robotics and Automated Systems (ARAS), Industrial Control Center of Excellence, Faculty of Electrical Engineering, K. N. Toosi University of Technology, Tehran 1969764499, Iran}
		\tnotetext[Corresponding author]{Corresponding author.\\E-mail addresses: F.lotfi@email.kntu.ac.ir (F. Lotfi), Taghirad@kntu.ac.ir (H.D. Taghirad).}
		
		%% Group authors per affiliation:
		%%\author{Elsevier\fnref{myfootnote}}
		%%\address{Radarweg 29, Amsterdam}
		%%\fntext[myfootnote]{Since 1880.}
		
		%% or include affiliations in footnotes:
		%%\author[mymainaddress,mysecondaryaddress]{Elsevier Inc}
		%%\ead[url]{www.elsevier.com}
		
		%%\author[mysecondaryaddress]{Global Customer Service\corref{mycorrespondingauthor}}
		%%\cortext[mycorrespondingauthor]{Corresponding author}
		%%\ead{support@elsevier.com}
		
		%%\address[mymainaddress]{1600 John F Kennedy Boulevard, Philadelphia}
		%%\address[mysecondaryaddress]{360 Park Avenue South, New York}

		\begin{abstract}
			Both recognition and 3D tracking of frontal dynamic objects are
			crucial problems in an autonomous vehicle, while depth estimation as
			an essential issue becomes a challenging problem using a monocular
			camera. Since both camera and objects are moving, the issue can be
			formed as a structure from motion (SFM) problem. In this paper, to
			elicit features from an image, the YOLOv3 approach is utilized beside an OpenCV tracker. Subsequently,
			to obtain the lateral and longitudinal distances, a nonlinear SFM
			model is considered alongside a state-dependent Riccati equation
			(SDRE) filter and a newly developed observation model. Additionally,
			a switching method in the form of switching estimation error
			covariance is proposed to enhance the robust performance of the SDRE
			filter. The stability analysis of the presented filter is conducted
			on a class of discrete nonlinear systems. Furthermore, the ultimate bound of
			estimation error caused by model uncertainties is analytically
			obtained to investigate the switching significance. Simulations are
			reported to validate the performance of the switched SDRE filter.
			Finally, real-time experiments are performed through a multi-thread
			framework implemented on a Jetson TX2 board, while radar data is
			used for the evaluation.
		\end{abstract}
		
		\begin{keyword}
			Deep learning, recognition and 3D tracking, frontal dynamic objects,
			structure from motion, switched SDRE filter
		\end{keyword}
		
	\end{frontmatter}
	
	%\linenumbers
	
	\section{Introduction}
	3D tracking of frontal dynamic objects is highly
	lucrative to obtain rich information about the surrounding
	environment in an autonomous vehicle. Since the movement is in two
	directions, the main purpose is
	to obtain the lateral and longitudinal distances of the frontal
	dynamic objects while recognizing them, simultaneously. The depth,
	however, is not directly measurable by a single camera, due to the
	3D projection, and therefore, it has to be estimated. Deep learning
	has been proved to be applicable to solve this issue. As reported
	in~\cite{ref4}, a method based on deep fully convolutional residual
	networks may be used to formulate depth estimation as a pixel-wise
	classification task and the main concept is to discretize the
	continuous ground-truth depth into several bins and label them regarding their depth ranges. In~\cite{ref5}, a deep convolutional
	neural field model is presented which may be used for depth
	estimation of general scenes without any extra information such as
	geometric priors. Furthermore, in~\cite{ref6} and~\cite{ref7}, an
	unsupervised training approach is proposed that enables a CNN to
	estimate the depth, despite the absence of ground truth depth data.
	\textcolor{black}{Even though these methods promise a depth map of both indoor and
		outdoor environments, their computational cost prevents to simply
		utilize them in real-time applications. Moreover, these approaches are appropriate for static scenes or when a dense depth map is needed \cite{ref10_6}. In the application mentioned in this article, frontal dynamic objects are counted as a sparse depth map requirement with lateral distance estimation. To the best knowledge of the authors these two objectives are not addressed simultaneously in any of the presented approaches in the literature. }
	
	On the other hand, there are other approaches reported in the
	literature, which are based on obtaining structure from motion. In
	these methods, movements are used to estimate the depth ~\cite{ref8,
		ref9}. \textcolor{black}{Basically, an SFM problem can be solved via deep learning techniques or classic methods. As the state-of-the-art deep learning based solution, \cite{deepsfm1} may be mentioned which proposes a physical driven architecture regardless of the accurate camera poses. This contains two cost volume based architectures for depth and pose estimation. The second representative approach is \cite{deepsfm2} which is based on robust image retrieval for coarse camera pose estimation and robust local features for accurate pose refinement. Deep learning based methods such as this approach solve the SFM problem through obtaining a dense depth map or a dense 3D map while the focus of this paper is on estimating of the longitudinal and lateral distances to specific frontal objects, which may be seen as producing a sparse map. Classic approaches, however, are more relevant taking an observer along with
		a model that relates the observations of feature points in a 2D
		image to the corresponding Euclidean coordinates in the inertia
		frame~\cite{ref10, ref11, ref12}. To be more specific, consider
		a moving object Euclidean position is needed and a single moving
		camera is utilized to obtain it. The first step is to determine
		feature points of the object and to take their positions as
		observations in the image. Then, a motion model together with an
		observer can be exploited to estimate both the longitudinal and
		lateral distances of the feature points. 
		Regarding the second step, methods
		such as particle filter~\cite{ref19}, variable structure
		\cite{ref20}, high-order sliding-mode~\cite{ref21}, extended Kalman
		Filter (EKF)~\cite{ref22} and methods based on SDRE~\cite{ref23} may
		be considered.} 
	
	To specify the feature points automatically, deep CNNs may be applied to detect and
	recognize the objects ~\cite{ref13, ref14,ref13_2}. There are diverse
	architectures that provide the capability to recognize visual
	patterns directly from pixel images with negligible preprocessing.
	For instance, AlexNet, ZFNet, ResNet, VGGNet, and YOLO are introduced
	with a different number of convolutional, pooling and fully
	connected layers for various applications \cite{ref15}. Furthermore,
	OpenCV trackers \cite{ref32} such as tracking-learning-detection
	(TLD)~\cite{ref16} and Median-flow~\cite{ref17} may be implemented
	alongside CNNs to reduce the computational cost.
	
	\section{Related Works and Motivation }
	\textcolor{black}{To further clarify the gap in the literature and how this paprer has addressed it, consider two main components in solving an SFM problem in a real-world application. For the first component, consider a classic nonlinear observer that is employed to estimate the state variables while using the environmental observations. To produce the observations, however, it is necessary to take a strong image processing algorithm as the second component when utilizing a camera as the sensor. In the first component, several methods are proposed among which the unknown input observer (UIO) may be considered as the state-of-the-art (SOTA) since it is widely used in the estimation problems where unknown inputs such as disturbances are present \cite{UIO1,ref10_3,ref10_5}. In an SFM problem, the intended object velocities are taken as unknown inputs which makes the UIO a beneficial approach \cite{ref10,ref10_1}. This observer basically eliminates the unknown input effects and is commonly used in the literature for various applications \cite{ref10_2,ref10_4}. The main assumption in employing a UIO approach is, however, to have a perfect model of the system which indeed is a limiting hypothesis. Thus, the first contribution of this paper is to propose a new discrete-time nonlinear observer to address the SFM problem in presence of model uncertainties. A comparison study is performed with respect to the UIO to reveal the capability of the presented estimator. According to the reported results, the UIO performs better when the model is perfect; however, the proposed nonlinear observer outperforms the UIO in the presence of modeling uncertainty. Besides, the SFM problem is not applicable to the autonomous vehicle field since it needs the movement of the camera in all the three directions based on some basic assumptions \cite{ref10,ref11}. In this regard, a new observation model is necessary to be added to the SFM problem to further generalize this approach to the autonomous vehicle area. Giving attention to the second component, almost all of the presented approaches employ a simple method to produce the observations such as utilizing a pre-mounted marker on the target or considering specific target features while focusing on the first component of the problem \cite{ref11,ref10_1,ref10_2}. This, of course, is not applicable to the problem of estimating the distances to the frontal objects about which there is no a priori-knowledge. Consequently, a real-time image processing method is necessary to produce the observations robustly. Finally, all the previous approaches are presented to estimate the variables for short distances. This research, however, covers estimation of the lateral and longitudinal distances up to $80$ meters away from the host vehicle which is crucial in case of an autonomous vehicle application. \\
		The SFM problem addressed in this paper includes the estimation of all the coordinates corresponding to a feature point. However, in the machine learning field, especially in the deep learning methods, the simplified problem of just estimating the depth is taken into consideration. In \cite{ref_new_1}, it has been claimed that the first end-to-end learning-based model is developed to directly predict distances for
		given objects in the images. The mentioned research is associated with only the depth estimation, in the absense of lateral distance prediction. Furthermore, as the stringent requirement for this CNN model the object bounding boxes are needed, which in turns requires adding another deep CNN like YOLO, and thus downgrading its performance. The other similar work is presented in \cite{ref_new_2}, where an RCNN \cite{ref_new_3} based architecture is used as the main model along with another shallow network to accomplish the task of the detection and distance estimation, simultaneously. Comparing this approach with the one presented in this paper, obviously, the RCNN architecture yields poor performance in terms of computation cost and execution speed, while the lateral distance is not considered at all. Finally, as a powerful approach, \cite{ref_new_4} proposes DisNet to estimate the depth up to $100$ meters while its accuracy reaches to $90\%$. It uses a deep neural network consisting of three hidden layers with $100$ neurons at each layer, while it is just evaluated on a number of specific static scenes, and not considering the lateral distance like other mentioned approaches. As a result, it can be remarked that this is the first time that the problem of both the lateral and longitudinal distances estimation is formulated and worked out for dynamic environment. }   
	
	In this paper, a rigid body kinematic motion model as exploited in
	\cite{ref11} is used alongside YOLOv3~\cite{ref18}, Median-flow
	tracker and a newly designed filter to estimate the longitudinal and
	the lateral distances of frontal objects. Since the dynamics of these two variables are not
	linear, nonlinear filters are preferable. 
	\textcolor{black}{SDRE filter high degree of freedom provides
		singularity avoidance making it a proper choice employed in various applications as the optimal solution for nonlinear estimation and control problems \cite{SDRE_1,SDRE_2,SDRE_3}. This filter incorporates the nonlinearity into the estimation equations based on a parameterization which compared to the EKF is counted as a significant advantage since the EKF linearizes the nonlinearity \cite{ref_SDRE}. However, model uncertainty yields limitations
		in the SDRE filter implementation which can be further empowered based on a switching concept presented in this research.}
	Consequently,
	a discrete time-dependent switching-based SDRE filter is proposed,
	to further improve the structural robustness of the common SDRE
	filter against model uncertainty. More accurately, incorporation of
	switching concepts and traditional SDRE filter increases the SDRE
	filter performance in the presence of uncertainties. It is worth
	noting that in this approach, the switching of estimation error
	covariance matrix with a predetermined frequency is considered in
	the filter dynamics. The main outcome of the proposed method is
	regulation of estimation error covariance matrix eigenvalues which
	result in more robustness and a lower amount of ultimate error bound.
	Note that, the concept of using switching is presented in the author's previous
	research~\cite{ref_khodam2} for continuous-time systems. In this
	paper, the proposed estimator is extended to the discrete-time
	systems, while it is applied to the application of autonomous cars.
	\textcolor{black}{The literature regarding the stability analysis and the equations of the estimator is far different of that of the continous-time. The theorems along with the lemmas derived in this article can be used in further theoretical developments in discrete-time related approaches.}
	Considering the application of the proposed filter in estimation
	theory, stability analysis based on the Lyapunov theorem is
	performed on a class of nonlinear systems. Then robustness analysis
	is accomplished and the influence of switching on the ultimate bound
	of estimation error is investigated. Simulations are given to reveal
	the effectiveness of the proposed filter, and finally, real-time
	implementation results of the filter are reported to verify the
	theoretical developments in the application of autonomous
	vehicles.
	
	The rest of this paper is arranged as follows. In section II the
	mathematical preliminaries are presented along with stability
	analysis of the proposed filter and ultimate error bound
	determination. Section III concentrates on the model utilized to
	estimate both longitudinal and lateral positions of frontal dynamic
	objects. In section IV, simulations are performed on the comparison
	of the proposed filter with other methods on solving the SFM
	problem. Experimental results are reported in section V to verify
	the applicability of the proposed approach. Finally, concluding
	remarks are presented in section VI.
	
	\section{The Designed Filter Theory}
	\subsection{Preliminaries}
	Assume the general representation of a nonlinear system as follows.
	\begin{eqnarray} \label{1}
	&x(t+1) = f(x(t),u(t)) + \Delta f(x(t),u(t)) + Gw_0(t)\\
	&y(t) = h(x(t),u(t)) + \Delta h(x(t),u(t)) + D_1v_0(t)
	\end{eqnarray}
	where $x(t) \in \Re^n$ represents the system state vector, $u(t) \in \Re^m$
	is the input, $y(t) \in \Re^p$ indicates the observed output, $w_0
	(t)$ and $v_0 (t)$, are the process and measurement noises with unknown statistical
	properties, respectively; furthermore, $\Delta f(x(t),u(t))$ and $\Delta h(x(t),u(t))$
	are the model uncertainties. Note that, $t$ denotes the time-step.
	
	SDRE method is based on a state dependent
	coefficient (SDC) formed state space. According to \cite{ref24}, most non-linear
	state equations can be transformed to this form. Thus, rewriting the system model in SDC form yields:
	\begin{eqnarray}
	\label{3}
	&x(t+1) = A(x(t))x(t) + \Delta f(x(t)) + B(x(t))u(t) \nonumber \\
	&+ \Delta B(x(t))u(t) + Gw_0 (t)
	\end{eqnarray}
	\begin{eqnarray}
	\label{4}
	&y(t)=C(x(t))x(t) + \Delta h(x(t)) + D(x(t))u(t) \nonumber \\
	&+ \Delta D(x(t))u(t) + D_1v_0(t)
	\end{eqnarray}
	considering the following assumption holds : $\Delta f(x,u)= \Delta
	f(x) + \Delta B(x)u$ and $\Delta h(x,u)= \Delta h(x) + \Delta
	D(x)u$.
	
	\emph{ Remark 1:} SDC form is not unique for
	multivariable systems as if $A_1(x)$ and $A_2(x)$ are two distinct factor
	coefficients of $f(x)$, then, $A_3 (x) = Z(x) A_1 (x) + (I - Z(x) )
	A_2 (x)$ is a parameterizations of $f(x)$ for any matrix
	function $Z(x) \in \Re^{n \times n}$ as well. This flexibility is valid for $A(x)$, $B(x)$,
	$C(x)$ and $D(x)$.
	
	This Remark shows a significant advantage of SDRE method, which
	results in design process efficiency such as singular and
	unobservable points avoidance, and Lipschitz condition realization.
	The main objective is to design a filter which estimates the state
	vector $x(t)$ robustly, through measurable output vector, $y(t)$.
	The proposed filter equation is presented as follows.
	\begin{eqnarray}
	\label{5}
	&\hat{x}(t+1) = A(\hat{x})\hat{x}(t) + B(\hat{x})u(t)
	\nonumber \\
	& + L(\hat{x},t)[ y(t) - (C(\hat{x})\hat{x}(t) + D(\hat{x})u(t))]
	\end{eqnarray}
	where $\hat{x}(t)$ demonstrates estimated state variable vector and the
	filter gain $L(\hat{x},t) \in \Re^{n \times p}$, is defined as
	\begin{equation}
	\label{6}
	L(\hat{x},t) = A(\hat{x})P(t)C^{T}(\hat{x})[C(\hat{x})P(t)C^{T}(\hat{x})+R(t)]^{-1}
	\end{equation}
	in which, the symmetric matrix $P(t) \in \Re^{n \times n}$
	satisfies the state dependent differential Riccati
	equation~(SDDRE) (\ref{7}) with positive definite matrices $Q(t) \in
	\Re^{n \times n}$ and $R \in \Re^{p \times p}$  \cite{ref25}.
	\begin{eqnarray}
	\label{7}
	&P(t+1) = A(\hat{x})P(t)A^{T}(\hat{x}) + Q - A(\hat{x})P(t)C^{T}(\hat{x})[ \nonumber \\
	& C(\hat{x})P(t)C^{T}(\hat{x})+R(t)]^{-1}C(\hat{x})P(t)A^{T}(\hat{x})
	\end{eqnarray}
	The estimation error is defined as following:
	\begin{equation}
	\label{8}
	e(t) = x(t)- \hat{x}(t)
	\end{equation}
	Subtract(\ref{5}) from (\ref{3}) and perform some simplifications to
	reach to:
	\begin{eqnarray}
	\label{9}
	&e(t+1) = [A(\hat{x}) - L(\hat{x},t)C(\hat{x})]e(t) + \alpha (x,\hat{x},u)   \nonumber \\
	&- L(\hat{x},t)\beta (x,\hat{x},u) - L(\hat{x},t)v(t) + \Gamma w(t)
	\end{eqnarray}
	where,
	\begin{equation}
	\label{10}
	\alpha (x,\hat{x},u) = [A(x) - A(\hat{x})]x(t) + [B(x) - B(\hat{x})]u(t)
	\end{equation}
	\begin{equation}
	\label{11}
	\beta (x,\hat{x},u) = [C(x) - C(\hat{x})]x(t) + [D(x) - D(\hat{x})]u(t)
	\end{equation}
	and
	\begin{eqnarray}
	\label{111} 
	&w = (\Delta f(x) \quad \Delta B(x)u(t) \quad w_0(t))^{T}, \nonumber \\
	&\Gamma = (I \quad I \quad G ), \nonumber \\
	&v = (I \quad I \quad I)(\Delta h(x) \quad \Delta D(x)u(t) \quad D_1v_0)^{T}. \nonumber
	\end{eqnarray}
	
	As the final step to develop the  proposed filter, the switching
	concept is incorporated. It is proposed to switch $P(t)$ to its
	initial value with a specific frequency that is obtained through
	estimation error stability and ultimate error bound analysis. As a
	result, the filter estimation is divided into a family of
	subsystems, that can be studied in switched systems domain.
	
	\subsection{Stability Analysis}
	In this subsection, the sufficient condition for the estimation
	error dynamic stability is obtained. Furthermore, the ultimate
	estimation error bound caused by the model uncertainties is
	analytically acquired to further investigate the switching effects.
	Since time dependent switching is applied, the following dwell-time
	theorem is proven here for the sake of stability analysis.
	\newtheorem{Theorem}{Theorem}
	\begin{Theorem}
		Consider the non-linear discrete-time system presented in equations
		(\ref{3}) and (\ref{4}), along with equations (\ref{5}), (\ref{6})
		and (\ref{7}), related to the proposed  filter, and suppose the
		following assumptions hold:
		
		1) There exist $\overline{a},\overline{c}>0$, which $A(x)$ and
		$C(x)$ are upper-bounded for any $k \ge 0$ as follows:
		\begin{equation}
		\label{12} \Vert A(x) \Vert \le \overline{a},\;\; \Vert C(x) \Vert
		\le \overline{c}
		\end{equation}
		
		2) The states and inputs are bounded such that
		\begin{equation}
		\label{13} \Vert x(t) \Vert \le \sigma , \;\; \Vert u(t) \Vert \le
		\rho
		\end{equation}
		where $\sigma , \rho
		> 0$ for $t \ge 0$.
		
		3) $P(t)$, the solution of Riccati differential equation is bounded as follows
		\begin{equation}
		\label{14}
		\underline{p}I \le P(t) \le \overline{p}I
		\end{equation}
		where $\overline{p}$, $\underline{p}$ are positive real numbers.
		
		4) The SDC parameterizations is chosen such that matrices $A(x) ,
		B(x) , C(x) , D(x)$ are locally Lipschitz, i.e., there exist
		constants $k_A , k_B , k_C , k_D>0$ such that
		\begin{equation}
		\label{15}
		\Vert A(x) - A(\hat{x}) \Vert \le k_A \Vert x - \hat{x} \Vert
		\end{equation}
		\begin{equation}
		\label{16}
		\Vert B(x) - B(\hat{x}) \Vert \le k_B \Vert x - \hat{x} \Vert
		\end{equation}
		\begin{equation}
		\label{17}
		\Vert C(x) - C(\hat{x}) \Vert \le k_C \Vert x - \hat{x} \Vert
		\end{equation}
		\begin{equation}
		\label{18}
		\Vert D(x) - D(\hat{x}) \Vert \le k_D \Vert x - \hat{x} \Vert
		\end{equation}
		for any $x , \hat{x} \in \Re^{n}$ with $\Vert x - \hat{x} \Vert \le
		\varepsilon_A$ and $\Vert x - \hat{x} \Vert \le \varepsilon_B$ and
		$\Vert x - \hat{x} \Vert \le \varepsilon_C$ and $\Vert x - \hat{x} \Vert
		\le \varepsilon_D$, respectively.
		
		Then, to have stable estimation error dynamic (\ref{9}), the following condition is adequate to be hold for positive numbers $\kappa^\prime$ and $\lambda$.
		\begin{equation}
		\label{19}
		2 \kappa^\prime \Vert \Pi(t+1) [ - L(\hat{x},t)v(t) + \Gamma w(t)] \Vert + (\lambda_{0} + \frac{\lambda}{2}) \lambda_{max}^{\Pi (t)} \le \lambda \lambda_{min}^{\Pi (t)}
		\end{equation}
		where $\lambda_0$
		is directly related to the switching frequency and $\Pi (t) =
		P^{-1}(t)$.
	\end{Theorem}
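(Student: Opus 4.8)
The plan is to construct a discrete-time Lyapunov argument built on the quadratic form $V(t) = e^{T}(t)\Pi(t)e(t)$ with $\Pi(t) = P^{-1}(t)$, i.e. the estimation-error energy weighted by the inverse Riccati solution. Assumption~3 immediately supplies the two-sided bound $\lambda_{min}^{\Pi(t)}\Vert e(t)\Vert^{2} \le V(t) \le \lambda_{max}^{\Pi(t)}\Vert e(t)\Vert^{2}$, which is precisely why the extreme eigenvalues of $\Pi(t)$ appear in condition~(\ref{19}). The objective then reduces to showing that (\ref{19}) forces the one-step increment $V(t+1)-V(t)$ to be non-positive, so that $V$ acts as a (switched) Lyapunov function and the error recursion~(\ref{9}) is stable.

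First I would rewrite (\ref{9}) compactly as $e(t+1) = M\,e(t) + \phi(t)$, with $M := A(\hat{x}) - L(\hat{x},t)C(\hat{x})$ and the lumped perturbation $\phi(t) := \alpha(x,\hat{x},u) - L(\hat{x},t)\beta(x,\hat{x},u) - L(\hat{x},t)v(t) + \Gamma w(t)$, and then expand
\begin{equation}
\begin{aligned}
V(t+1) &= e^{T}(t)M^{T}\Pi(t+1)M\,e(t) + 2e^{T}(t)M^{T}\Pi(t+1)\phi(t)\\
&\quad + \phi^{T}(t)\Pi(t+1)\phi(t).
\end{aligned}
\end{equation}
The structural heart of the argument is the nominal quadratic term. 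Exploiting the Joseph-form rewriting of the Riccati recursion~(\ref{7}), namely $P(t+1) = M P(t) M^{T} + L R L^{T} + Q$, together with a Schur-complement manipulation, I would establish the contraction-type inequality $M^{T}\Pi(t+1)M \le \Pi(t)$, with a strict factor below one available from $Q>0$; this dominates the nominal part of $V(t+1)$ by $V(t)$ and furnishes the decay margin against which everything else is balanced.

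Next I would control the cross and purely quadratic perturbation terms. The uncertainty contributions are Lipschitz in the error: combining Assumption~4 with the state/input bounds of Assumption~2 gives $\Vert\alpha\Vert \le (k_{A}\sigma + k_{B}\rho)\Vert e\Vert$ and $\Vert\beta\Vert \le (k_{C}\sigma + k_{D}\rho)\Vert e\Vert$, while Assumptions~1 and~3 bound $\Vert M\Vert$ and $\Vert L\Vert$. These estimates let me absorb the error-proportional pieces into a single constant $\kappa^{\prime}$ and bound them through $\lambda_{max}^{\Pi(t)}$, while collecting the genuinely exogenous pieces $-L v + \Gamma w$ into the disturbance term $2\kappa^{\prime}\Vert\Pi(t+1)[-L(\hat{x},t)v(t) + \Gamma w(t)]\Vert$ of~(\ref{19}). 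The switching enters here: since $P(t)$ is periodically reset to $P(0)$, the weight $\Pi(t+1)$ can jump relative to $\Pi(t)$, and I would bound the resulting contribution by a term governed by $\lambda_{0}$, whose size is fixed by the dwell time / switching frequency. Gathering the nominal bound, the decay margin, the error-proportional uncertainty terms (carried by $\lambda_{max}^{\Pi(t)}$), the switching jump (carried by $\lambda_{0}$), and the exogenous disturbance term, and invoking the eigenvalue sandwich of $V$, collapses $V(t+1)-V(t)\le 0$ exactly to inequality~(\ref{19}).

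I expect the main obstacle to be the bookkeeping that reduces the matrix-valued cross terms and, in particular, the switching-induced change in $\Pi$ to the scalar eigenvalue form of~(\ref{19}): one must keep every matrix inequality correctly oriented (monotonicity of inversion, the Schur-complement step behind $M^{T}\Pi(t+1)M \le \Pi(t)$) and make the $\lambda_{0}$ contribution of the reset precise in terms of the switching period. This is also the step that genuinely distinguishes the discrete-time switched analysis from the continuous-time predecessor in~\cite{ref_khodam2}, where the analogous role is played by $\dot{\Pi}$ rather than by the difference $\Pi(t+1)-\Pi(t)$ and its jump.
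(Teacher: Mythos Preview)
Your skeleton is right: the paper also takes $V_p(e)=e^{T}\Pi(t)e$ with $\Pi=P^{-1}$, expands $\Delta V_p$ along the error recursion, and uses a Riccati/Joseph manipulation to obtain the contraction $(A-LC)^{T}\Pi(t{+}1)(A-LC)\le(1-\lambda)\Pi(t)$ (their Lemma~2), together with the Lipschitz bounds on $\alpha,\beta$. Where your plan drifts from the paper is in two specific places, and both affect how the individual pieces of~(\ref{19}) arise.

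First, the switching is not absorbed into the one-step difference as you propose. The paper invokes a discrete dwell-time result (their Lemma~1) and proves the theorem in two steps: Step~1 shows that \emph{within each subsystem} one has $\Delta V_p\le -\lambda_0 V_p$ (strict exponential decay, not merely $\Delta V\le 0$), and Step~2 checks the multiple-Lyapunov jump condition $V_p\le\mu V_q$ at switching instants. In that framework $\lambda_0$ is the required intra-subsystem decay rate, linked to the switching frequency only afterwards through the dwell-time bound $\tau_a>-\ln\mu/\ln(1-\lambda_0)$. Your reading of $\lambda_0$ as a direct bound on the jump $\Pi(t{+}1)-\Pi(t)$ at a reset is not how the paper uses it, and this misidentification would leave you unable to reproduce the term $(\lambda_0+\tfrac{\lambda}{2})\lambda_{max}^{\Pi(t)}$ in~(\ref{19}) with the correct meaning.

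Second, the coefficient $\tfrac{\lambda}{2}$ in~(\ref{19}) does not come from switching at all; it comes from the nonlinear cross terms. The paper groups $[\alpha-L\beta]^{T}\Pi(t{+}1)\bigl[2(A-LC)e+\alpha-L\beta\bigr]$ and, via their Lemma~3, bounds it by $k_{\varphi}\Vert e\Vert^{3}$, which for $\Vert e\Vert\le\varepsilon$ is then absorbed as $k_{\varphi}\Vert e\Vert^{3}\le\tfrac{\lambda}{2}V_p$. The remaining cross term between $\alpha-L\beta$ and the exogenous piece $-Lv+\Gamma w$ is exactly what produces $2k'\Vert\Pi(t{+}1)[-Lv+\Gamma w]\Vert$ (their $k'$ is your $\kappa'$). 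Your plan treats the $\alpha,\beta$ contributions purely at the linear level and routes the $\kappa'$ constant differently, so it does not naturally generate the $\tfrac{\lambda}{2}\lambda_{max}^{\Pi(t)}$ summand; you would need the higher-order estimate and the local-absorption step to land on~(\ref{19}) as stated.
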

	
	\emph{Remark 2:} The maximum and minimum eigenvalues of $\Pi$ are
	represented by $\lambda_{max}^{\Pi}$ and $\lambda_{min}^{\Pi}$,
	respectively.
	
	Before presenting the proof of theorem 1, some points have to be mentioned.
	
	\emph{Remark 3:} Inequality (\ref{14}) is very important to be
	satisfied in estimation error stability. This is directly related to
	the system observability based on the following assumptions
	\cite{ref26}:
	
	1- The design matrix $Q$ and initial condition $P(0)$ are positive
	definite and $A(x)$ has a bounded norm.
	
	2- The SDC form is chosen such that the pair of $\{A(x),C(x)\}$ is
	uniformly detectable through Definition 1.
	
	\vspace{2mm}
	
	\newtheorem{Definition}{Definition}
	\begin{Definition}
		$\{A(x),C(x)\}$ satisfies the  uniform observability condition, if
		there exist $\underline{m}, \overline{m}>0$ and $a>0$ as an integer
		such that
		\begin{equation}
		\label{20}
		\underline{m} I \le \sum_{i=t}^{t+a} \chi^{-T}(i,t) C_i^T C_i \chi(i,t) \le \overline{m} I
		\end{equation}
		where $\chi(t,t)=I$ and for $i>t$ the following is valid
		\begin{equation}
		\label{21}
		\chi(i,t) = A_{i-1}...A_t
		\end{equation}
		
	\end{Definition}
	
	\emph{Remark 4:} Assumption 4 is commonly considered in researches
	\cite{ref27} and compared to the previous SDRE filter studies, there
	is no new limiting condition for the mentioned SDC form.
	
	\emph{Remark 5:} Switching the covariance matrix yields to a family
	of subsystems which their stability can be analyzed by considering
	distinct Lyapunov functions for each subsystem based on direct
	Lyapunov method.
	
	\begin{proof}
		Theorem 1 is proved within two steps through exploiting the
		following Lemma.
		\newtheorem{Lemma}{Lemma}
		\begin{Lemma}
			Consider (\ref{9}) as a set of subsystems. Suppose there exist a
			Lyapunov candidate function $V_p$ for subsystem $p$ along with two
			$K_{\infty}$-class functions $\alpha_1$ and $\alpha_2$, and two
			constants $0 < \lambda_0 < 1$ and $\mu > 0$ such that
			\begin{equation}
			\label{22}
			\alpha_1 (\Vert x \Vert) \le V_p(x) \le \alpha_2 (\Vert x \Vert) \quad \forall x, \quad \forall p \in \varrho
			\end{equation}
			\begin{equation}
			\label{23}
			\Delta V_p(x) \le -\lambda_{0} V_p(x) \quad \forall x, \quad \forall p \in \varrho
			\end{equation}
			\begin{equation}
			\label{24}
			V_{p} (x) \le \mu V_{q} (x) \quad \forall x, \quad \forall p,q \in \varrho
			\end{equation}
			where, $\varrho$ denotes the set of all subsystems. Then, the
			non-linear discrete-time switched system (\ref{9}) is stable for
			every switching signal with average dwell-time $\tau_a$ if
			\begin{equation}
			\label{25}
			\tau_a > -\frac{\ln{\mu}}{\ln(1 - \lambda_0)}
			\end{equation}
		\end{Lemma}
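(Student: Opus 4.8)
The plan is to establish the standard average dwell-time estimate adapted to discrete time, by chaining the per-subsystem geometric decay from (\ref{23}) with the switching jumps controlled by (\ref{24}), and then converting the resulting bound on the Lyapunov value into a bound on the state through the class-$K_\infty$ functions in (\ref{22}).

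First I would fix a switching signal $\sigma$ with switching instants $t_0 < t_1 < \dots < t_k \le t < t_{k+1}$ and write $V(t) := V_{\sigma(t)}(x(t))$ for the value of the active Lyapunov function. On any interval $[t_i,t_{i+1})$ on which the single subsystem $p_i = \sigma(t_i)$ is active, inequality (\ref{23}) reads $V_{p_i}(x(s+1)) \le (1-\lambda_0)V_{p_i}(x(s))$, so iterating across the interval gives the geometric decay $V_{p_i}(x(t_{i+1})) \le (1-\lambda_0)^{t_{i+1}-t_i}V_{p_i}(x(t_i))$. At each switching instant I would invoke (\ref{24}) to bound the jump from the outgoing to the incoming subsystem by the factor $\mu$, that is $V_{\sigma(t_{i+1})}(x(t_{i+1})) \le \mu\,V_{\sigma(t_i)}(x(t_{i+1}))$.

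Next I would concatenate these two estimates across all intervals from $t_0$ to $t$. Telescoping the decay factors and collecting one factor of $\mu$ per switch yields $V(t) \le \mu^{N_\sigma(t_0,t)}(1-\lambda_0)^{t-t_0}V(t_0)$, where $N_\sigma(t_0,t)$ is the number of switches on $(t_0,t]$. The average dwell-time hypothesis supplies $N_\sigma(t_0,t) \le N_0 + (t-t_0)/\tau_a$; substituting and regrouping gives $V(t) \le \mu^{N_0}\bigl[\mu^{1/\tau_a}(1-\lambda_0)\bigr]^{t-t_0}V(t_0)$. The decay base is strictly below one precisely when $\tfrac{1}{\tau_a}\ln\mu + \ln(1-\lambda_0) < 0$, and since $0 < 1-\lambda_0 < 1$ makes $\ln(1-\lambda_0) < 0$ this rearranges to $\tau_a > -\ln\mu/\ln(1-\lambda_0)$, which is exactly (\ref{25}). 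Finally, using $\alpha_1(\Vert x(t)\Vert) \le V(t)$ on the left and $V(t_0) \le \alpha_2(\Vert x(t_0)\Vert)$ on the right from (\ref{22}) and inverting $\alpha_1$ produces a decaying $K_\infty$ bound on $\Vert x(t)\Vert$, establishing asymptotic stability of (\ref{9}).

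The main obstacle I anticipate is not the analysis but the bookkeeping at switching instants in discrete time: one must decide whether the state update at a switching step is governed by the outgoing or the incoming subsystem, and correspondingly whether the factor $\mu$ from (\ref{24}) is applied before or after the last decay step of each interval. Keeping this index accounting consistent is what guarantees that exactly $N_\sigma(t_0,t)$ copies of $\mu$ and exactly $t-t_0$ copies of $(1-\lambda_0)$ appear, rather than off-by-one discrepancies that would perturb the threshold in (\ref{25}). A secondary point is to confirm $\mu \ge 1$, so that dividing through in the final rearrangement preserves the inequality direction; this follows by taking $p=q$ in (\ref{24}).
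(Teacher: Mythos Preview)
Your argument is correct and is precisely the standard average dwell-time proof for discrete-time switched systems: per-interval geometric decay from (\ref{23}), a factor $\mu$ from (\ref{24}) at each switch, concatenation to $V(t)\le \mu^{N_\sigma(t_0,t)}(1-\lambda_0)^{t-t_0}V(t_0)$, the average dwell-time bound on $N_\sigma$, and the sandwich (\ref{22}) to recover a state bound. The bookkeeping concerns you raise are real but do not affect the threshold in (\ref{25}); any off-by-one in the number of $\mu$- or $(1-\lambda_0)$-factors only changes the leading constant, not the base of the decay.

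The only point of comparison is that the paper does not actually prove Lemma~1: it states ``Lemma 1 is comprehensively studied along with its proof in \cite{ref28}'' and then uses the lemma as a black box, devoting the remainder of the argument to verifying conditions (\ref{22})--(\ref{24}) for the particular choice $V_p(e)=e^T\Pi(t)e$. So your write-up supplies exactly what the paper outsources to a reference; there is no alternative in-paper proof to contrast with.
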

		Lemma1 is comprehensively studied along with its proof in
		\cite{ref28}. Since inequality (\ref{22}) holds due to assumption 3,
		the two following steps focus on ensuring conditions (\ref{23}) and
		(\ref{24}).
		
		\emph{Step 1:} To present the sufficient condition for (\ref{23}),
		the following Lyapunov function candidate  is considered:
		\begin{equation}
		\label{26}
		V_p(e(t))=e^T(t)\Pi(t)e(t)
		\end{equation}
		Furthermore, considering (\ref{14}) results to:
		\begin{equation}
		\label{27}
		\frac{1}{\overline{p}} {\Vert e(t) \Vert}^{2} \le V(e(t)) \le \frac{1}{\underline{p}} {\Vert e(t) \Vert}^{2}
		\end{equation}
		where $\overline{p}$ and $\underline{p}$ are the minimum and maximum
		eigenvalues of matrix $P$, respectively. Inequality (\ref{27})
		indicates that the selected Lyapunov function candidate  is positive
		definite and decresent. Next, inequality (\ref{23}) is investigated
		by considering (\ref{26}) time-difference.
		\begin{eqnarray}
		\label{28}
		& \Delta V_p(e) = e^T(t+1)\Pi(t+1)e(t+1) - e^T(t)\Pi(t)e(t) \nonumber \\
		& = [\alpha - L \beta - L v + \Gamma w]^T \Pi(t+1) [\alpha - L \beta - L v + \Gamma w]  \nonumber \\
		&  - e^T \Pi(t) e + e^T[A - LC]^T\Pi(t+1)[A - LC]e \nonumber \\
		& + 2[\alpha - L \beta - L v + \Gamma w]^T \Pi(t+1) [A - L C]e
		\end{eqnarray}
		To simplify the above equation, the following two Lemmas are needed.
		
		\begin{Lemma}
			Under the four mentioned assumptions in theorem 1, there exist
			$0<\lambda<1$ such that the following inequality is satisfied if
			(\ref{6}) is used:
			\begin{equation}
			\label{29}
			(A - LC)^T \Pi(t + 1) (A - LC) \le (1 - \lambda) \Pi(t)
			\end{equation}
		\end{Lemma}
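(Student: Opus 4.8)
The plan is to reduce (\ref{29}) to an operator-norm bound by first rewriting the Riccati recursion (\ref{7}) in its symmetric, Joseph-stabilized form. Abbreviating $A=A(\hat{x})$, $C=C(\hat{x})$, $L=L(\hat{x},t)$ and $\Phi = A - LC$, I would start from the gain definition (\ref{6}), which gives the key identity $L(CP(t)C^{T}+R)=AP(t)C^{T}$, and substitute it into the expansion of $\Phi P(t)\Phi^{T} + LRL^{T}$. The cross terms $AP(t)C^{T}L^{T}$ and $LC P(t)A^{T}$ then cancel, collapsing the expression to $\Phi P(t)\Phi^{T} + LRL^{T} = AP(t)A^{T} - LCP(t)A^{T}$; adding $Q$ and comparing with (\ref{7}) yields
\[
P(t+1) = \Phi P(t)\Phi^{T} + LRL^{T} + Q.
\]
This is the crucial algebraic step; everything afterward is monotonicity in the Loewner order.

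Next, since $R(t)>0$ forces $LRL^{T}\ge 0$, I would drop that term to obtain $\Phi P(t)\Phi^{T} \le P(t+1) - Q$. Because $P(t+1)\ge \underline{p}I>0$ by assumption 3, the symmetric root $P(t+1)^{1/2}$ is invertible, so I would conjugate by $P(t+1)^{-1/2}$ and set $N := P(t+1)^{-1/2}\Phi P^{1/2}(t)$, giving
\[
NN^{T} = P(t+1)^{-1/2}\Phi P(t)\Phi^{T}P(t+1)^{-1/2} \le I - P(t+1)^{-1/2}QP(t+1)^{-1/2}.
\]
Taking the largest eigenvalue of both sides and using $\Vert N^{T}N\Vert = \Vert NN^{T}\Vert$ turns this into the scalar bound
\[
\Vert N \Vert^{2} \le 1 - \lambda_{min}\!\left(P(t+1)^{-1/2}QP(t+1)^{-1/2}\right).
\]

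To extract a uniform constant I would bound the eigenvalue term below using $Q\ge \lambda_{min}(Q)I$ together with $P(t+1)\le \overline{p}I$ (assumption 3 at step $t+1$), obtaining
\[
P(t+1)^{-1/2}QP(t+1)^{-1/2} \ge \lambda_{min}(Q)\,P(t+1)^{-1} \ge \frac{\lambda_{min}(Q)}{\overline{p}}\,I.
\]
Setting $\lambda := \lambda_{min}(Q)/\overline{p}$ then gives $\Vert N\Vert^{2}\le 1-\lambda$, i.e. $N^{T}N \le (1-\lambda)I$. Since $N^{T}N = P^{1/2}(t)\,\Phi^{T}\Pi(t+1)\Phi\,P^{1/2}(t)$, conjugating this last inequality by $P^{-1/2}(t)$ produces exactly $\Phi^{T}\Pi(t+1)\Phi \le (1-\lambda)\Pi(t)$, which is (\ref{29}). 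Positivity $\lambda>0$ is immediate from $Q>0$; $\lambda<1$ follows because $P(t+1)\ge Q$ forces $\overline{p}\ge \lambda_{min}(Q)$, with strictness from the $\Phi P(t)\Phi^{T}+LRL^{T}$ contribution (and should $\lambda$ hit $1$ in a degenerate case, $N=0$ and any smaller admissible $\lambda$ works).

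The main obstacle I anticipate is purely the first step: getting the Joseph-form identity exactly right, since the cancellation of the $AP(t)C^{T}L^{T}$ and $LCP(t)A^{T}$ cross terms hinges on substituting (\ref{6}) in the precise combination $L(CP(t)C^{T}+R)=AP(t)C^{T}$, and any sign or transpose slip there destroys the clean collapse. Once that identity is secured, the remainder is routine Loewner-order and eigenvalue bookkeeping, relying only on the positivity and boundedness of $P$ already granted by assumption 3.
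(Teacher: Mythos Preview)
Your proof is correct, but it follows a genuinely different route from the paper's. The paper expands (\ref{7}) as $P(t+1)=(A-LC)P(A-LC)^{T}+Q+LCP(A-LC)^{T}$ and then spends most of the argument showing that the mixed term $LCP(A-LC)^{T}$ is positive semidefinite, via the matrix inversion lemma and the factorization $LCP(A-LC)^{T}=A\bigl(A^{-1}LC\bigr)\bigl(A^{-1}(A-LC)P\bigr)^{T}$; this silently requires $A(\hat{x})$ to be invertible. From there the paper bounds $\Vert L\Vert\le \overline{a}\,\overline{p}\,\overline{c}/\underline{r}$ and $\Vert A-LC\Vert$ directly, arriving at the explicit constant $1-\lambda=\bigl(1+\underline{q}/(\overline{p}(\overline{a}+\overline{a}\,\overline{p}\,\overline{c}^{2}/\underline{r})^{2})\bigr)^{-1}$.

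Your Joseph-form identity $P(t+1)=\Phi P(t)\Phi^{T}+LRL^{T}+Q$ replaces that mixed term by $LRL^{T}$, whose positivity is immediate, so you never need $A$ invertible. The congruence-and-norm step $N^{T}N\le\Vert N\Vert^{2}I$ then delivers the sharper and simpler constant $\lambda=\lambda_{\min}(Q)/\overline{p}$, using only assumption~3. In short, both arguments establish $\Phi P(t)\Phi^{T}\le P(t+1)-Q$ and then extract a contraction factor, but your path is shorter, avoids an unstated hypothesis on $A$, and yields a constant that depends on fewer system bounds; the paper's version, on the other hand, makes the dependence on $\overline{a},\overline{c},\underline{r}$ explicit, which it later reuses when discussing how the design parameters influence $\lambda$ in Remark~9.
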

		
		\begin{proof}
			Employing (\ref{6}) and (\ref{7}) yields to
			\begin{equation}
			\label{30}
			P(t+1) = (A - LC) P (A - LC)^T + Q + LCP (A - LC)^T
			\end{equation}
			Considering (\ref{6}) and $LCP (A - LC)^T$, it can be shown that
			\begin{equation}
			\label{31}
			A^{-1}(A - LC)P = P - PC^T (CPC^T + R)^{-1}CP
			\end{equation}
			Moreover, utilize the inverse matrix Lemma to reach
			\begin{equation}
			\label{32}
			A^{-1}(A - LC)P = (P^{-1} + C^T R^{-1} C)^{-1}
			\end{equation}
			Equation (\ref{32}) is positive definite due to the fact that
			$P^{-1}(t)>0$ and
			\begin{equation}
			\label{33}
			A^{-1}LC = PC^T (CPC^T + R)^{-1}C \ge 0
			\end{equation}
			Then considering (\ref{32}) along with (\ref{33}) and $P^T=P$, it
			can be shown that
			\begin{equation}
			\label{34}
			LCP(A - LC)^T = A(A^{-1}LC)(A^{-1}(A - LC)P)^T
			\end{equation}
			Replace (\ref{34}) in (\ref{30}):
			\begin{equation}
			\label{35}
			P(t+1) \ge (A - LC) [P + (A - LC)^{-1}Q(A - LC)^{-T}](A - LC)^T
			\end{equation}
			On the other hand, taking (\ref{6}) along  with $\underline{q}I \le
			Q$, $\underline{r}I \le R$ yields to
			\begin{equation}
			\label{36}
			\Vert L \Vert \le \overline{apc} \frac{1}{\underline{r}}
			\end{equation}
			Hence, the following inequality is obtained, by using (\ref{36})
			alongside (\ref{35}) and (\ref{14}).
			\begin{equation}
			\label{37}
			(A - LC)^T \Pi(t+1) (A - LC) \le (1+\frac{\underline{q}}{\overline{p}(\overline{a}+\overline{apc}^2/\underline{r})^2})^{-1}\Pi(t)
			\end{equation}
			Finally, the inequality (\ref{29}) is held with the following
			relation:
			\begin{equation}
			\label{38}
			1 - \lambda = (1+\frac{\underline{q}}{\overline{p}(\overline{a}+\overline{apc}^2/\underline{r})^2})^{-1}
			\end{equation}
		\end{proof}
		
		\begin{Lemma}
			Consider that the above mentioned assumptions are satisfied, then
			there exist $\varepsilon^\prime , k_\varphi > 0$ such that for
			$\Vert e(t) \Vert \le \varepsilon^\prime$, the following inequality
			holds.
			\begin{equation}
			\label{39}
			[\alpha - L\beta]^T \Pi(t+1) [2(A - LC)e + \alpha - L\beta] \le k_\varphi \Vert e \Vert^3
			\end{equation}
			in which, $\alpha$ and $\beta$ are in accordance with (\ref{10}) and
			(\ref{11}), respectively.
		\end{Lemma}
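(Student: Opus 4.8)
The plan is to collapse inequality (\ref{39}) to a set of scalar norm estimates supplied by the four standing assumptions, and then to account for the power of $\|e\|$ carried by each factor. First I would estimate the two nonlinear residuals. Substituting the definitions (\ref{10})--(\ref{11}) and invoking the local Lipschitz bounds (\ref{15})--(\ref{18}) together with the state and input bounds (\ref{13}) gives
\[
\|\alpha\| \le (k_A\sigma + k_B\rho)\,\|e\| \;=:\; c_\alpha\|e\|, \qquad \|\beta\| \le (k_C\sigma + k_D\rho)\,\|e\| \;=:\; c_\beta\|e\|,
\]
valid on the common neighbourhood $\|e\| \le \varepsilon' := \min\{\varepsilon_A,\varepsilon_B,\varepsilon_C,\varepsilon_D\}$, which is precisely where $\varepsilon'$ enters. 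The remaining factors I would bound uniformly: from (\ref{36}) the gain satisfies $\|L\| \le \bar\ell$ for a constant $\bar\ell>0$, hence by assumption 1 $\|A - LC\| \le \overline{a} + \bar\ell\,\overline{c} =: c_{AC}$, and by assumption 3 the weighting obeys $\|\Pi(t+1)\| \le 1/\underline{p}$.

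Next I would expand the bilinear form as
\[
[\alpha - L\beta]^T \Pi(t+1)[2(A - LC)e + \alpha - L\beta] = 2[\alpha - L\beta]^T\Pi(t+1)(A-LC)e + [\alpha - L\beta]^T\Pi(t+1)[\alpha - L\beta],
\]
and bound each summand by the Cauchy--Schwarz inequality and submultiplicativity, using $\|\alpha - L\beta\| \le (c_\alpha + \bar\ell\,c_\beta)\|e\| =: c_1\|e\|$ and $\|2(A-LC)e + \alpha - L\beta\| \le (2c_{AC} + c_1)\|e\| =: c_2\|e\|$. This controls the whole expression on $\|e\|\le\varepsilon'$ and fixes the constant $k_\varphi$ in terms of $c_1, c_2$ and $\underline{p}$.

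The step I expect to be the main obstacle is matching the claimed order in $\|e\|$. The first-order Lipschitz estimates render $\alpha - L\beta$ and $2(A-LC)e + \alpha - L\beta$ each $O(\|e\|)$, so the routine product above yields only $O(\|e\|^2)$, whereas (\ref{39}) asserts $O(\|e\|^3)$. Recovering the extra power calls for a second-order treatment of the SDC increments: writing $x = \hat{x} + e$ in (\ref{10})--(\ref{11}) splits each residual into a part that is genuinely quadratic in $e$ (the increments $A(x)-A(\hat{x}),\dots$ contracted against $e$ itself) and a part that is only first order (those increments contracted against $\hat{x}$). The cubic bound then requires the first-order parts of $\alpha - L\beta$ to cancel against the $2(A-LC)e$ contribution, or to be higher order under the chosen SDC parameterization. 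Verifying this cancellation, rather than the routine norm bookkeeping, is the delicate point, and it is where I would concentrate the effort; absent it, the achievable bound is quadratic and $k_\varphi\|e\|^3$ would have to be read only as a local dominating term on $\|e\|\le\varepsilon'$.
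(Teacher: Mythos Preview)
Your norm bookkeeping is exactly the paper's: the same $\varepsilon' = \min\{\varepsilon_A,\varepsilon_B,\varepsilon_C,\varepsilon_D\}$, the same gain bound $\|L\|\le \overline{a}\,\overline{p}\,\overline{c}/\underline{r}$ from (\ref{36}), the same $\|\Pi(t+1)\|\le 1/\underline{p}$ from (\ref{14}), and the same Cauchy--Schwarz product leading to a constant of the form (\ref{46}). Structurally there is nothing to change.

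Where you and the paper diverge is precisely the point you flag as the main obstacle, and your instinct is correct. The paper does \emph{not} recover the extra factor of $\|e\|$ through any cancellation or second-order expansion of the SDC increments. It passes directly from the linear Lipschitz estimates (\ref{41})--(\ref{42}), which give $\|\alpha\|,\|\beta\| = O(\|e\|)$, to the assertion (\ref{43})--(\ref{44}) that $\|\alpha - L\beta\| \le k'\|e\|^{2}$, with no intervening step; the exponent simply changes from $1$ to $2$ between (\ref{42}) and (\ref{43}). Under the stated hypotheses (\ref{13}) and (\ref{15})--(\ref{18}) only the linear bound $\|\alpha - L\beta\| \le k'\|e\|$ is supported, so the product estimate justified by the assumptions is $O(\|e\|^{2})$, exactly as you compute. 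The cancellation mechanism you propose to search for is absent from the paper's argument; the cubic order there rests on an unexplained promotion of $\|e\|$ to $\|e\|^{2}$. In short, you have reproduced the paper's proof and, in addition, correctly identified a gap that the paper itself does not close.
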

		
		\begin{proof}
			Utilizing (\ref{36}) results in
			\begin{equation}
			\label{40}
			\Vert \alpha - L\beta \Vert \le \Vert \alpha \Vert + \overline{apc}\frac{1}{\underline{r}} \Vert \beta \Vert
			\end{equation}
			Now consider the assumption four to reach to
			\begin{equation}
			\label{41}
			\Vert \alpha \Vert \le (k_A \sigma + k_B \rho) \Vert x - \hat{x} \Vert
			\end{equation}
			\begin{equation}
			\label{42}
			\Vert \beta \Vert \le (k_C \sigma + k_D \rho) \Vert x - \hat{x} \Vert
			\end{equation}
			Then for $\varepsilon^\prime = min(\varepsilon_A,\varepsilon_B,\varepsilon_C,\varepsilon_D)$, (\ref{40}) can be rewritten as
			\begin{equation}
			\label{43}
			\Vert \alpha - L\beta \Vert \le (k_A \sigma + k_B \rho) \Vert e \Vert^2 + \overline{apc}\frac{1}{\underline{r}}(k_C \sigma + k_D \rho)\Vert e \Vert^2
			\end{equation}
			to further simplify, for $\Vert e \Vert \le \varepsilon^\prime$, the
			following inequality holds
			\begin{equation}
			\label{44}
			\Vert \alpha - L\beta \Vert \le k^\prime \Vert e \Vert^2
			\end{equation}
			where
			\begin{equation}
			\label{45}
			k^\prime = (k_A \sigma + k_B \rho) + \overline{apc}\frac{1}{\underline{r}}(k_C \sigma + k_D \rho)
			\end{equation}
			Hence, inequality (\ref{39}) is satisfied through exerting
			(\ref{46}).
			\begin{equation}
			\label{46}
			k_\varphi = \frac{k^\prime}{\underline{p}}[2(\overline{a} + \overline{apc}^2 / \underline{r}) + k^\prime \varepsilon^\prime]
			\end{equation}
		\end{proof}
		
		Applying the above two presented Lemmas to (\ref{28}) yields to:
		\begin{eqnarray}
		\label{47}
		& \Delta V_p(e) \le k_\varphi \Vert e \Vert^3 - (\lambda \lambda_{min}^{\Pi(t)}) \Vert e \Vert^2 + \lambda_{min}^{\Pi_{t+1}} \Vert - L v + \Gamma w \Vert \nonumber \\
		& + 2 \Vert [- L v + \Gamma w]^T \Pi_{t+1} [A - LC]\Vert \Vert e \Vert \nonumber \\
		& + 2 \Vert [\alpha - L\beta]^T \Pi_{t+1} [- L v + \Gamma w] \Vert
		\end{eqnarray}
		On the other hand, taking (\ref{27}) along with  $\Vert e \Vert \le
		\varepsilon$ yields to
		\begin{equation}
		\label{48}
		k_\varphi \Vert e \Vert^3 \le \frac{\lambda}{2}V_p(e)
		\end{equation}
		Finally, replacing (\ref{48}) and (\ref{44}) into (\ref{47}) results
		\begin{eqnarray}
		\label{49}
		& \Delta V_p(e) \le \lambda_{min}^{\Pi_{t+1}}\Vert  - L v + \Gamma w \Vert \nonumber \\
		& + (2k^\prime \Vert \Pi(t+1)[- L v + \Gamma w] \Vert - \lambda(\lambda_{min}^{\Pi(t)} - \frac{\lambda_{max}^{\Pi(t)}}{2}))\Vert e \Vert^2  \nonumber \\
		& + 2 \Vert [- L v + \Gamma w]^T \Pi(t+1) [A - LC]\Vert  \Vert e \Vert
		\end{eqnarray}
		Thus, sufficient condition to have (\ref{23}) is (\ref{19}). \\
		\emph{Remark 6:} The following well-known inequality is utilized along the proofs.
		\begin{equation}
		\label{50}
		\lambda_{min}^{A} \Vert e \Vert^2 \le e^T A e \le \lambda_{max}^{A} \Vert e \Vert^2
		\end{equation}
		where $A$ is supposed to be an arbitrary matrix.
		
		\emph{Step 2:} Consider $V_p$ as the Lyapunov function of the
		subsystem $p$. Suppose that $V_1 = e_1^T \Pi_{t_1} e_1$ and $V_2 =
		e_2^T \Pi_{t_2} e_2$ are the considered $V_p$ values at times $t_1$
		and $t_2$, respectively. Furthermore, assume that the inequality
		$\Pi_{t_2} > \Pi_{t_1}$ is valid. Consequently,  if (\ref{19})
		holds, then the inequality $e_1 > e_2$ is surely authentic for
		errors higher than the steady state error. Now if the switching
		happens at time $t_3$, taking $V_3 = e_3^T \Pi_{t_3} e_3$ along with
		$e_1 > e_2
		> e_3$ and $\Pi_{t_1} = \Pi{t_3}$ (as the result of switching), the
		inequality (\ref{24}) is immediately satisfied.
	\end{proof}
	Here, the proof of theorem 1 is accomplished.  However, the
	following remarks have to be considered.\\
	\emph{ Remark 7:} Taking ({\ref{14}), assumption $\Pi_{t_2} > \Pi_{t_1}$
		is logical.  \\
		\emph{ Remark 8:} Since the stability is analyzed in presence of model
		uncertainties, the estimation error is locally stable.\\
		\emph{ Remark 9:} To determine the proper switching frequency, a
		comprehensive study on the switching effect is needed. To this end,
		the following points shall be considered:\\
		1- The sufficient condition for (\ref{23}) to be valid is
		(\ref{19}); as it is seen, the first effect would be to hold
		$\lambda_{min}^{\Pi(t)}$ large and $\lambda_{max}^{\Pi(t)}$ small
		enough. Furthermore, there are two other parameters in (\ref{19})
		which are $k^{\prime}$ and $\lambda$. Since switching maintains
		$\overline{p}$ small, $k^{\prime}$ would be minimized according to
		(\ref{45}). In addition, enlarging $\lambda$ can reduce the
		uncertainty term effect in (\ref{19}), and by switching this
		conditions are fulfilled as a result of minimizing $\overline{p}$.\\
		2- From the previous point, one can infer to set the switching
		frequency  as large as it can. However, there is a significant
		compromise; considering (\ref{25}),  increasing switching frequency
		yields enhancement of $\lambda_0$ in (\ref{19}) which may cause some
		problems. Thus, the switching frequency have to be specified in a
		certain interval.
		
		\subsection{Ultimate Error Bound Analysis}
		This section concentrates on obtaining analytic form of ultimate
		estimation error bound caused by the model uncertainties. To this
		end, consider (\ref{49}) which is a quadratic function of $\Vert
		e(t) \Vert$. If (\ref{19}) holds, this function attains its maximum
		value of
		\begin{eqnarray}
		\label{51}
		&\epsilon = -[\frac{(\Vert [- L v + \Gamma w]^T \Pi(t+1) [A - LC])^2}{(2k^\prime \Vert \Pi(t+1)[- L v + \Gamma w] \Vert - \lambda(\lambda_{min}^{\Pi(t)} - \frac{\lambda_{max}^{\Pi(t)}}{2}))}] \nonumber \\
		& + \lambda_{min}^{\Pi_{t+1}}\Vert  - L v + \Gamma w \Vert
		\end{eqnarray}
		at
		\begin{equation}
		\label{52}
		\Vert e(t) \Vert =  -\frac{\Vert [- L v + \Gamma w]^T \Pi(t+1) [A - LC] \Vert}{(2k^\prime \Vert \Pi(t+1)[- L v + \Gamma w] \Vert - \lambda(\lambda_{min}^{\Pi(t)} - \frac{\lambda_{max}^{\Pi(t)}}{2}))}
		\end{equation}
		Hence, (\ref{49}) can be rewritten as
		\begin{equation}
		\label{53}
		\Delta V_p (e(t)) \le - \alpha_3 (\Vert e(t) \Vert) + \epsilon
		\end{equation}
		where $\alpha_3$ is a positive definite function. The following
		theorem determines the ultimate error bound in this case.
		\begin{Theorem}
			Consider the Lyapunov function defined in (\ref{26}) along with
			inequalities (\ref{27}) and (\ref{53}); suppose that (\ref{19}) is
			also satisfied, then the ultimate estimation error bound is given
			by:
			\begin{equation}
			\label{54} \left (\frac{\overline{p}}{\underline{p}}\right)^{1/2}
			\cdot \Vert (\alpha_3^{-1}(\epsilon)) \Vert
			\end{equation}
		\end{Theorem}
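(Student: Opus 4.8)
The plan is to recognize the claim as the discrete-time version of the standard Lyapunov-based uniform ultimate boundedness theorem (the discrete analogue of Khalil's Theorem~4.18), applied to the error dynamics (\ref{9}) with the Lyapunov function (\ref{26}). The two ingredients are already in place: the sandwich (\ref{27}) supplies the comparison functions $\alpha_1(r)=r^2/\overline{p}$ and $\alpha_2(r)=r^2/\underline{p}$, both of class $K_{\infty}$, while the one-step decrease (\ref{53}) supplies the dissipation inequality $\Delta V_p(e)\le-\alpha_3(\Vert e\Vert)+\epsilon$. Since the downward part of the quadratic in (\ref{49}) is strictly negative once (\ref{19}) holds, $\alpha_3$ is continuous, positive definite and strictly increasing on the local region $\Vert e\Vert\le\varepsilon^\prime$; hence it is a class-$K$ function there and $\alpha_3^{-1}(\epsilon)$ is well defined, provided $\epsilon$ lies in its range.

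First I would introduce the threshold $\mu:=\alpha_3^{-1}(\epsilon)$, so that $-\alpha_3(\Vert e\Vert)+\epsilon<0$ precisely when $\Vert e\Vert>\mu$; by (\ref{53}) this gives $\Delta V_p(e(t))<0$ at every step for which $\Vert e(t)\Vert>\mu$. Next I would propagate this to the sublevel set $\Omega:=\{e:V_p(e)\le\alpha_2(\mu)\}$. Whenever $V_p(e(t))>\alpha_2(\mu)$, the upper bound in (\ref{27}) forces $\alpha_2(\Vert e(t)\Vert)\ge V_p(e(t))>\alpha_2(\mu)$, hence $\Vert e(t)\Vert>\mu$ and therefore $\Delta V_p(e(t))<0$. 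Thus $V_p$ decreases strictly and monotonically as long as the trajectory stays outside $\Omega$, which forces it to enter $\Omega$ after finitely many steps and to remain in an $\Omega$-type set thereafter.

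The ultimate bound then follows by translating $V_p\le\alpha_2(\mu)$ back into a bound on $\Vert e\Vert$ through the lower half of (\ref{27}): from $\frac{1}{\overline{p}}\Vert e\Vert^2\le V_p(e)\le\alpha_2(\mu)=\frac{1}{\underline{p}}\mu^2$ one obtains $\Vert e\Vert\le(\overline{p}/\underline{p})^{1/2}\mu=(\overline{p}/\underline{p})^{1/2}\alpha_3^{-1}(\epsilon)$, which is exactly (\ref{54}). Equivalently the bound is the composition $\alpha_1^{-1}(\alpha_2(\mu))$, and the mismatch factor $(\overline{p}/\underline{p})^{1/2}$ is nothing but the conditioning of $P(t)$ entering through $\alpha_1^{-1}\circ\alpha_2$.

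The step I expect to be the main obstacle is the discrete-time invariance argument, not the algebra. In continuous time the ball $\Vert e\Vert\le\mu$ is forward invariant once reached, but in discrete time the state can overshoot it in a single step; this is exactly why the bound must be stated through the level set $\{V_p\le\alpha_2(\mu)\}$ rather than through the ball itself, and why the factor $(\overline{p}/\underline{p})^{1/2}$ is unavoidable. Care is therefore needed to show that $\Omega$ (or a slightly inflated level set absorbing the worst single-step increase of $V_p$ from inside the ball) is positively invariant, and that the monotone decrease established outside $\Omega$ indeed drives every trajectory into it in finite time; establishing this invariance rigorously, together with verifying that $\epsilon$ stays within the range of $\alpha_3$ on the local domain $\Vert e\Vert\le\varepsilon^\prime$, is where the real work lies.
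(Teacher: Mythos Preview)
Your proposal is correct and follows essentially the same route as the paper. The paper's proof is terser: it identifies $\alpha_3^{-1}(\epsilon)$ with the positive root of the quadratic (\ref{49}) via the quadratic formula and then directly invokes the standard bound $\Vert e(t)\Vert \le \alpha_1^{-1}(\alpha_2(\alpha_3^{-1}(\epsilon)))$ by citing \cite{ref29}, whereas you spell out the level-set invariance argument that underlies that inequality; in substance both are the same Khalil-type ultimate-boundedness computation, and your discussion of the discrete-time overshoot subtlety is a detail the paper simply delegates to the cited reference.
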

		\begin{proof}
			Since (\ref{19}) holds, setting $\Delta V_p$ equal to zero in
			(\ref{53}) yields an equation which its solution is the ultimate
			error bound. Consequently, considering (\ref{49}) beside (\ref{53})
			results to the following solution:
			\begin{equation}
			\label{55}
			\frac{-b + \vert (b^2 - 4ac)^{1/2} \vert}{2a}
			\end{equation}
			where
			\begin{equation}
			\label{56}
			a = -2k^\prime \Vert \Pi(t+1)[- L v + \Gamma w] \Vert + \lambda(\lambda_{min}^{\Pi(t)} - \frac{\lambda_{max}^{\Pi(t)}}{2})
			\end{equation}
			\begin{equation}
			\label{57}
			b = -2 \Vert [- L v + \Gamma w]^T \Pi(t+1) [A - LC] \Vert
			\end{equation}
			\begin{equation}
			\label{58}
			c = -\lambda_{min}^{\Pi_{t+1}}\Vert  - L v + \Gamma w \Vert
			\end{equation}
			On the other hand, the following inequality is surely held
			\cite{ref29}
			\begin{equation}
			\label{59}
			\Vert e(t) \Vert \le \alpha_1^{-1}(\alpha_2(\alpha_3^{-1})) \quad \forall t \ge kT + t_0
			\end{equation}
			where $\alpha_1$ and $\alpha_2$ are obtained from (\ref{27})
			according to (\ref{22}). Thus, the result is equal to (\ref{54}) and
			the proof of theorem 2 is completed.
		\end{proof}
		\emph{ Remark 10:} As expressed  in remark 9, the impact of
		switching is to reduce $\overline{p}$, and increase $\underline{p}$.
		Consequently, taking the ultimate bound of estimation error as
		(\ref{54}), switching alleviates the steady state error through
		minimizing $\left (\frac{\overline{p}}{\underline{p}}\right )$.
		
		\section{The Employed Model}
		\subsection{Kinematic Motion Model}
		The state space model utilized to implement the proposed filter is
		presented in this section. First, a general form is introduced along
		with a simple output model. Then, to ensure observability of the
		model in an autonomous vehicle motion plane, a modified output model
		is proposed based on the intercept theorem \cite{ref30}. Fig.
		\ref{fig1} delineates coordinate systems for a moving object
		observed by a moving camera. where $\mathcal{F}_G$ represents a
		fixed inertial reference frame and $\mathcal{F}_C$ denotes the
		camera fixed reference frame. The vectors $r_q,r_c \in \Re^3$ are
		from the origin of $\mathcal{F}_G$ to the point on the object and
		the camera principle point, respectively. Consider $r_{q/c}
		\triangleq [X \quad Y \quad Z]^T \in \Re^3$ as the relative position
		vector represented in $\mathcal{F}_C$, the following expresses its
		kinematic
		\begin{equation}
		\label{60}
		\dot{r}_{q/c} = v_q - v_c - \omega \times r_{q/c}
		\end{equation}
		where $v_q \in \Re^3$ denotes the linear velocity of the point on
		the object, $v_c \in \Re^3$ is the linear camera velocity and
		$\omega \in \Re^3$ is the angular velocity of the camera, all
		represented in $\mathcal{F}_C$.
		\begin{figure}[t]
			\centering
			\includegraphics[width=2.5in]{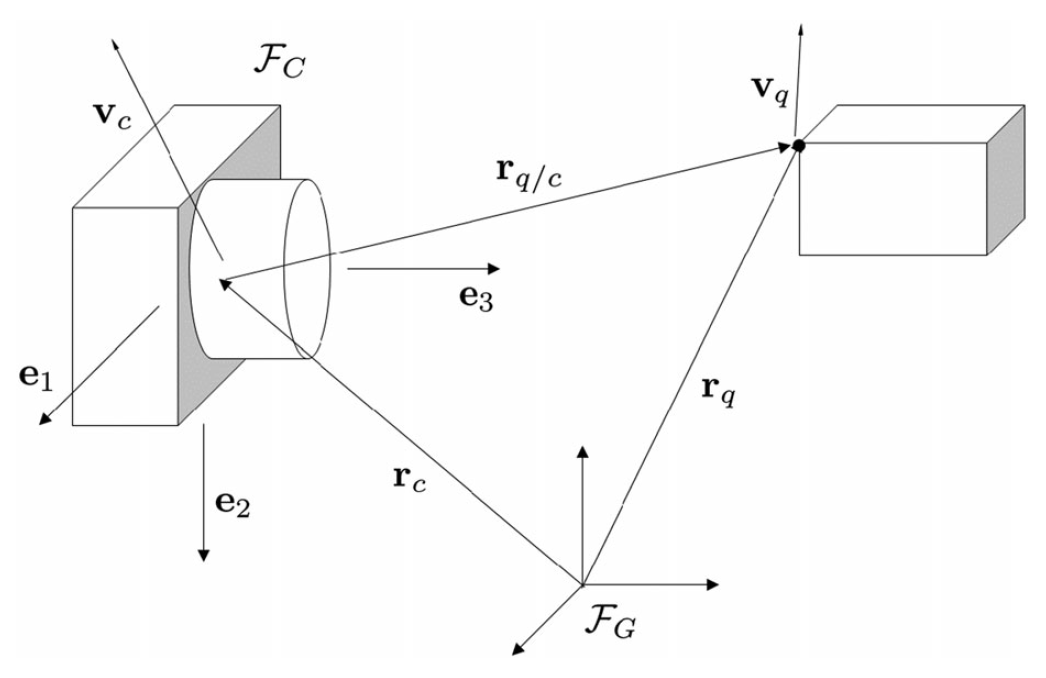}
			\caption{Schematic of the Model \cite{ref10}.}
			\label{fig1}
		\end{figure}
		Taking $x = [x_1, x_2, x_3]^T = [\frac{X}{Z}, \frac{Y}{Z}, \frac{1}{Z}]^T \in \Re^3$ as the state vector, the continuous-time state space model is obtained as follows
		\begin{eqnarray}
		\label{61}
		&\dot{x}_1 = \Omega_1 + \zeta_1 + v_{q1} x_3 - x_1 v_{q3} x_3 \nonumber \\
		&\dot{x}_2 = \Omega_2 + \zeta_2 + v_{q2} x_3 - x_2 v_{q3} x_3 \nonumber \\
		&\dot{x}_3 = v_{c3} x_3^2 - (\omega_2 x_1 - \omega_1 x_2) x_3 - v_{q3} x_3^2
		\end{eqnarray}
		where it is assumed that $v_q\triangleq[v_{q1},v_{q2}, v_{q3}]$,
		$v_c\triangleq[v_{c1},v_{c2},v_{c3}]$, $\omega
		\triangleq[\omega_{1},\omega_{2},\omega_{3}]$; besides, $\Omega_1,
		\Omega_2, \zeta_1, \zeta_2: [0,\infty) \times \Re^3 \rightarrow \Re$
		are defined as follows
		\begin{eqnarray}
		\label{62}
		&\Omega_1 (t,x) = \omega_3x_2 - \omega_2 - \omega_2x_1^2 + \omega_1 x_1 x_2 \nonumber \\
		&\Omega_2 (t,x) = -\omega_3x_1 + \omega_1 - \omega_2x_1x_2 + \omega_1 x_2^2 \nonumber \\
		&\zeta_1(t,x) = (v_{c3}x_1 - v_{c1}) x_3 \nonumber \\
		& \zeta_2(t,x) = (v_{c3}x_2 - v_{c2}) x_3
		\end{eqnarray}
		To get comprehensive details about the presented model,  refer to
		\cite{ref10}. Since the images taken from camera are utilized as
		observations, the output model is simply $y = [x_1,x_2]^T$ as well.
		
		The  following assumptions have to be made, in order to employ this model for the proposed filter:\\
		1- Camera velocities $v_c$ and $\omega$ are measurable.\\
		2- The first two state variables are bounded and observed continuously by the camera.\\
		3- Object velocities are taken as unknown inputs to the system. \\
		\emph{ Remark 11:} Utilizing projective geometry,  the feature point
		coordinates of the image, $u,v$ are related to the normalized
		Euclidean one by the following equation:
		\begin{equation}
		\label{63}
		[u \quad v \quad 1]^T = A_c [Y/Z \quad X/Z \quad 1]^T
		\end{equation}
		where $A_c \in \Re^{3 \times 3}$ is the invertible camera  intrinsic
		parameter matrix \cite{ref10}. Utilizing (\ref{63}), the states
		$x_1$ and $x_2$ are measurable if assumption 2 holds. This model has
		a major limitation in 2D motion plane of an autonomous vehicle;
		having just one state variable, $x_2$ observed by the camera,
		estimating the depth ($x_3$) is not possible due to the resulted
		unobservable model. To solve this problem, a new output model is
		defined through employing the intercept theorem.
		\subsection{The Developed Model}
		In an autonomous vehicle motion plane, there are two  degrees of
		freedom. Since the presented kinematic model is generally in 3D
		dimension, it has to be reduced to a 2D one which does not consider
		$x_1$. In this regard, $x_1$ can be set as a constant known value in
		(\ref{61}). To have an observable state space, an output model is
		developed through applying the following theorem.
		\begin{Theorem}
			(Thales' intercept theorem \cite{ref30}). Consider an arbitrary triangle $ABC$ as in Fig. \ref{fig2} and let $AC$ be extended to $C^\prime$ and $AB$ to $B^\prime$, so that $B^\prime C^\prime$ is parallel to $BC$. Then the lengths of the sides satisfy the relations
			\begin{equation}
			\label{64}
			\frac{a^\prime}{a} = \frac{b^\prime}{b} = \frac{c^\prime}{c}
			\end{equation}
			and hence
			\begin{equation}
			\label{65}
			\frac{a^\prime}{c^\prime} = \frac{a}{c},\quad \frac{c^\prime}{b^\prime} = \frac{c}{b},\quad \frac{b^\prime}{a^\prime} = \frac{b}{a}
			\end{equation}
		\end{Theorem}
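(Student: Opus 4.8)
The plan is to prove Theorem 3 by establishing that the extended triangle $AB'C'$ is similar to the original triangle $ABC$, and then simply reading off the side ratios. Since this is a purely Euclidean statement, the natural tool is the angle--angle (AA) similarity criterion together with the elementary properties of parallel lines cut by a transversal.

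First I would exploit the hypothesis $B'C' \parallel BC$. Treating the line $AB'$ (which contains $B$) as a transversal of the two parallel lines, the corresponding angles are equal, so $\angle AB'C' = \angle ABC$. Likewise, treating $AC'$ (which contains $C$) as a transversal yields $\angle AC'B' = \angle ACB$. Because the two triangles share the angle at the common vertex $A$, i.e. $\angle B'AC' = \angle BAC$, the AA criterion gives $\triangle ABC \sim \triangle AB'C'$.

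Once the similarity is in hand, corresponding sides are proportional. With the standard convention that $a = BC$, $b = CA$, $c = AB$ are the sides of $ABC$ and $a' = B'C'$, $b' = AC'$, $c' = AB'$ the corresponding sides of $AB'C'$, the similarity yields directly
$$\frac{a'}{a} = \frac{b'}{b} = \frac{c'}{c},$$
which is (\ref{64}). The second group of identities (\ref{65}) then follows by pure algebra: from any equality of the form $a'/a = c'/c$ one obtains $a'/c' = a/c$ by cross-multiplication, and the remaining two relations are recovered by permuting the roles of the sides. No further geometric input is needed for this step.

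I do not expect a genuine obstacle here, as the result is classical. The only point requiring care is the rigorous justification of the two angle equalities from the parallel hypothesis, which rests on the equality of corresponding angles for a transversal of two parallel lines. An alternative, equally short route avoids the AA criterion and instead compares areas of triangles sharing a common base on the line $B'C'$: since $B'C' \parallel BC$ forces equal perpendicular heights, one reproduces (\ref{64}) through the basic proportionality theorem. Either way, (\ref{65}) is immediate once (\ref{64}) is secured.
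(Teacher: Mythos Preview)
Your argument is correct and is the standard textbook proof of the intercept theorem via AA similarity; the algebraic derivation of (\ref{65}) from (\ref{64}) is also fine. However, there is nothing to compare it against: the paper does not supply its own proof of Theorem~3. It merely quotes the statement from \cite{ref30} and then immediately applies it to derive the output model in~(\ref{66})--(\ref{67}). So your proposal goes beyond what the paper does, rather than reproducing or diverging from an existing argument.
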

		\begin{figure}[t]
			\centering
			\includegraphics[width=2.5in]{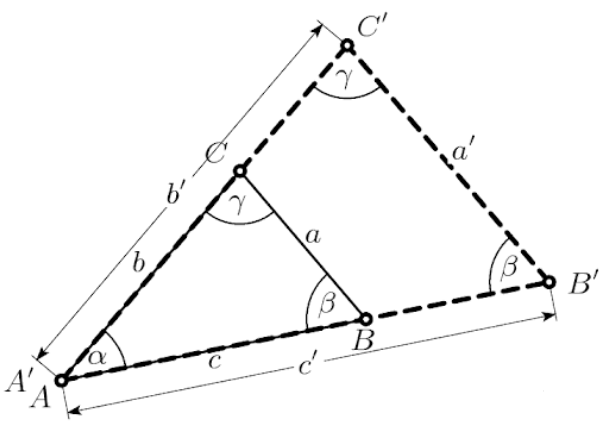}
			\caption{Thales' intercept theorem \cite{ref30}.}
			\label{fig2}
		\end{figure}
		Now consider Fig. \ref{fig3} to carry on the output model expansion. The main purpose is to estimate $Z$, which is the distance from the object contact point with the ground to the camera pinhole $P$. In this regard, $Y$ is taken as an observation on the image plane and the following relation is obtained through employing theorem 3:
		\begin{equation}
		\label{66}
		\frac{F}{F+Z} = \frac{\tilde{Y}}{\tilde{Y}+H}
		\end{equation}
		where $F$ is the focal length of the camera, $H$ is the height of
		the camera with respect to the ground, and $Y_1=Y-VP$. Note that,
		$Y_1$ is not the observation since it is calculated with respect to
		the camera center point. Thus, $VP$ is used to modify $Y$. Furthermore, since $Y_1$ is in pixel unit, $\tilde{Y} = Y_1 \frac{h_f}{h}$ is utilized as the vertical position of the object in the image plane in which $h$ and $h_f$ are the maximum amount of pixels along the height of the camera film and its real height, respectively. Thus, having $F$, $H$ and $Y_1$, the depth($Z$) is estimated. However, to utilize this equation as the output model, rewrite (\ref{66}) as follows
		\begin{equation}
		\label{67} F\left (\frac{H}{Z}\right ) = \tilde{Y}
		\end{equation}
		
		\begin{figure}[]
			\centering
			\includegraphics[width=2.5in]{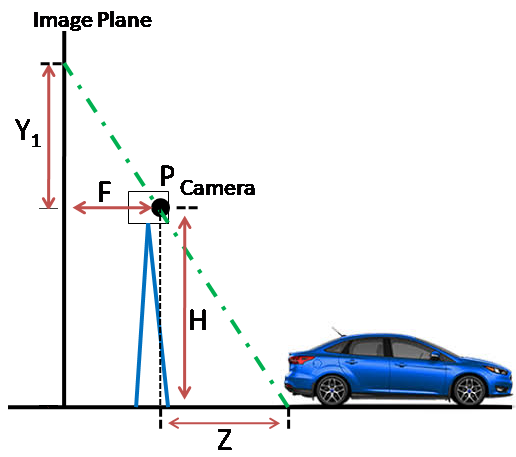}
			\caption{Concept of the developed output model}
			\label{fig3}
		\end{figure}
		\emph{Remark 12:} $h$ and $h_f$ are directly related to the sensor
		dimensions, and they are reported in the camera information.
		Besides, $VP$ is the vanishing point of the camera. 
	}
	
	Since the state vector is defined as $x = (\frac{X}{Z}, \frac{Y}{Z}, \frac{1}{Z})$, it can be seen that $Z = \frac{1}{x_3}$. To obtain $\tilde{Y}$ in terms of the state variables, consider matrix $A_c$ in (\ref{63}) as
	\begin{eqnarray}
	\label{67_1}
	&A_c =
	\left[
	\begin{matrix}
	f_x & 0 & x_0 \\ 0 & f_y & y_0 \\ 0 & 0 & 1
	\end{matrix}
	\right]
	\end{eqnarray}
	where $f_x$ and $f_y$ are equal to each other and representing the focal length of the camera. Moreover, $x_0$ and $y_0$ are determined from the distance between the image origin and its center. Then taking into account (\ref{67_1}) along (\ref{63}), $\tilde{Y}$ is obtained as follows:
	\begin{equation}
	\label{67_2}
	\tilde{Y} = (f_y(x_1) + y_0)\frac{h_f}{h}
	\end{equation}
	Consequently, the output model would be
	\begin{equation}
	\label{68}
	x_1 = ((FHx_3 )\frac{h}{h_f} - y_0)\frac{1}{f_y}
	\end{equation}
	There is an important point  to note; although $x_1$ is taken as a
	constant value in the state space, the output model can be defined
	as follows without loss of generality:
	\begin{equation}
	\label{69}
	y = [x_2, ((FHx_3)\frac{h}{h_f} - y_0)\frac{1}{f_y} ]^T
	\end{equation}
	Note that $x_1$ is not considered in (\ref{69}) due to the
	assumption made to take this state as a constant known parameter.
	This output model yields an observable state space which can be
	employed in the real implementation.
	
	\section{Simulation Results}
	This section is organized as follows: First, simulation results on
	state estimation based on the general state space form in (\ref{61})
	alongside $y = [x_1, x_2]^T$ as observation model are analyzed in
	absence of model uncertainties. Then, the uncertainty in the model
	is added and its effects are studied. Finally, a Monte Carlo
	simulation is performed to analyze the robustness of the proposed
	approach with respect to the other approaches. All simulations are
	reported based on a comparison study of the proposed filter with an
	unknown input observer and the common SDRE filter. A point to
	ponder is that the UIO is a common approach in solving the SFM
	problem in the literature \cite{ref10,ref11}.
	Furthermore, this method is designed for estimations in the presence
	of unknown inputs \cite{ref10}. Since in this paper the problem is
	discussed in the presence of unknown inputs in addition to model
	uncertainties, it is proposed to utilize the proposed switched SDRE
	method instead. Note that, simulation details are in accordance
	with~\cite{ref10}.
	
	\begin{figure}[]
		\centering
		\subfloat[]{\includegraphics[width=2in]
			{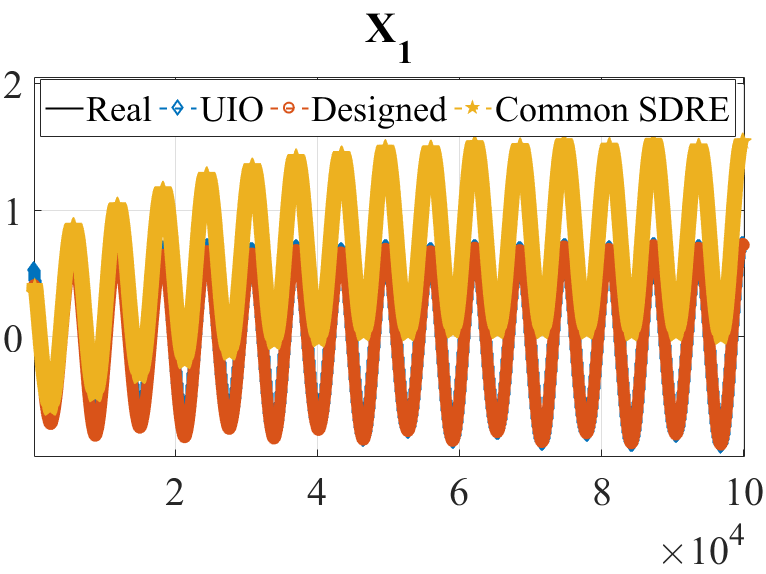}}
		%\hfil
		\subfloat[Magnified]{\includegraphics[width=2in]
			{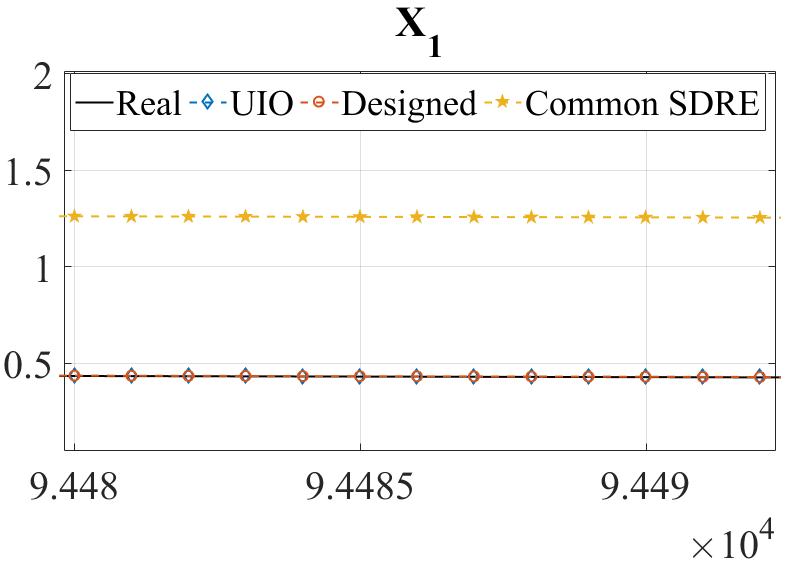}} \\
		\subfloat[]{\includegraphics[width=2in]{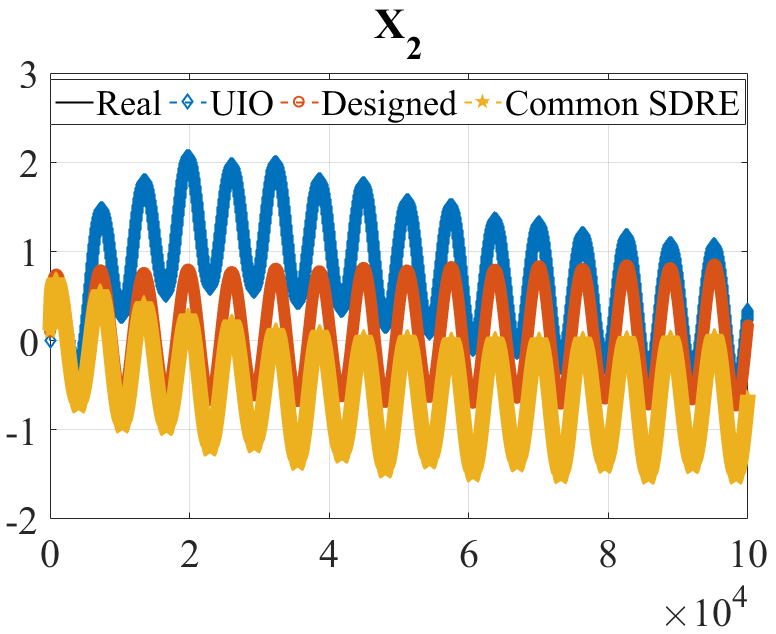}}
		\subfloat[Magnified]{\includegraphics[width=2in]
			{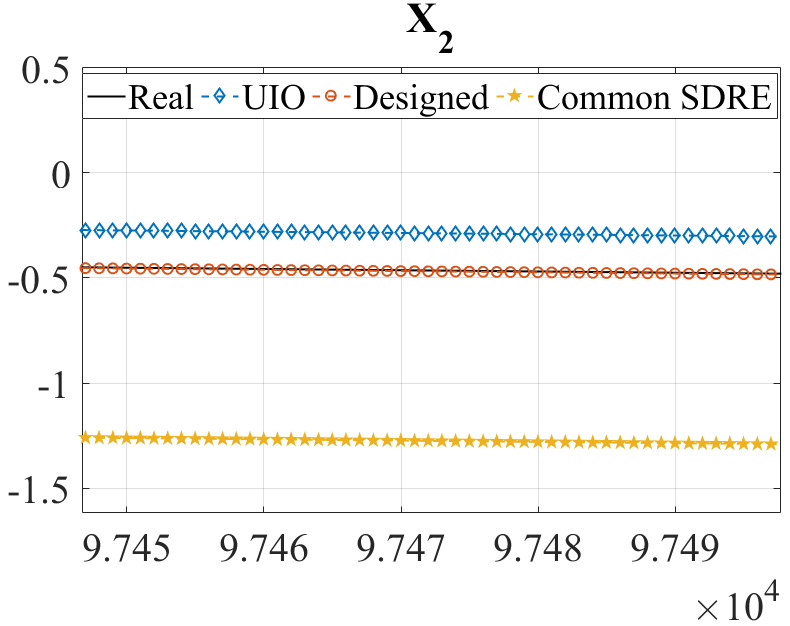}}\\
		\subfloat[]{\includegraphics[width=2.1in]{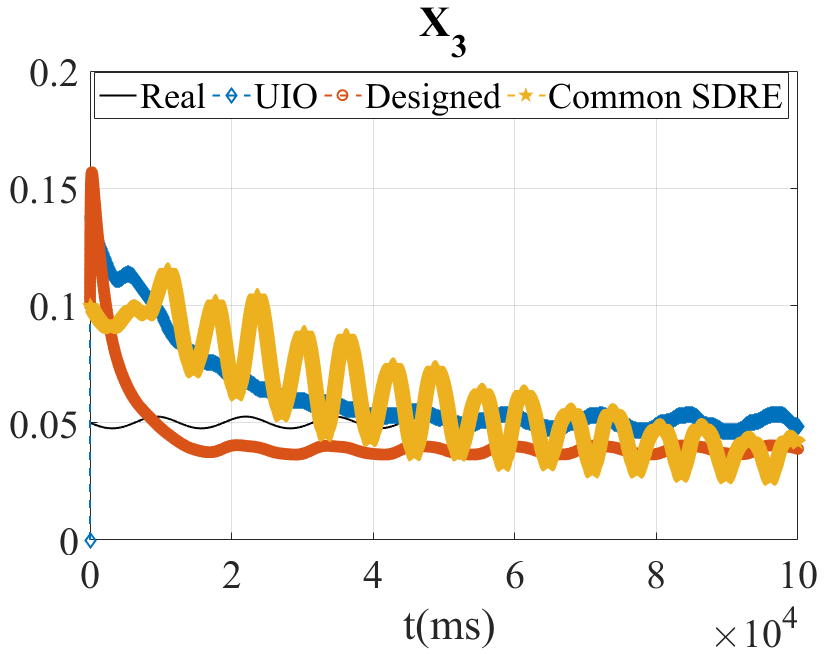}}
		\subfloat[Magnified]{\includegraphics[width=2.1in]
			{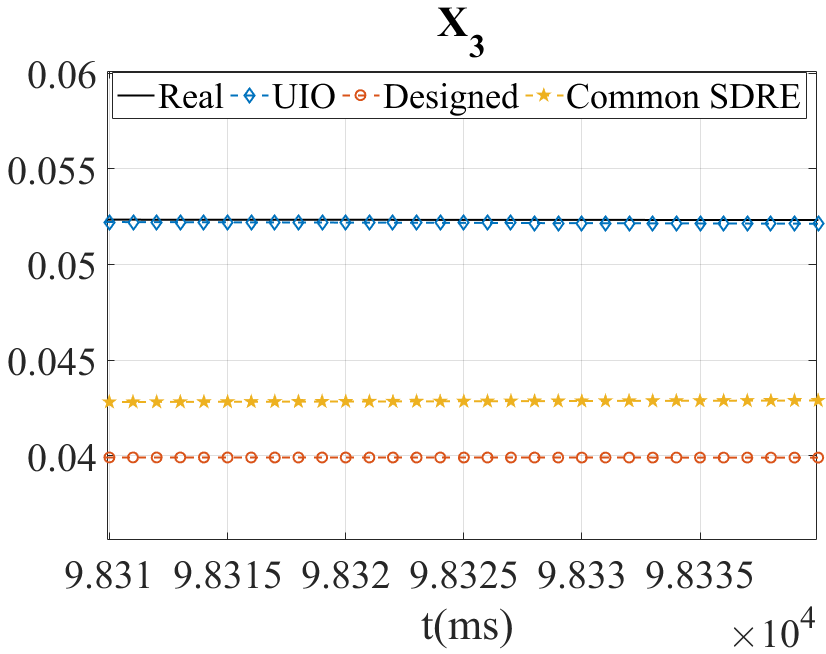}} \caption{Estimation results in absence of uncertainties of the model}
		\label{fig4}
	\end{figure}
	\emph{ Remark 13:} To discretize the continuous-time model in
	(\ref{61}), Euler method \cite{ref31} is employed to preserve
	stability in the conversion.
	
	Fig. \ref{fig4} displays the results of the three mentioned methods
	for estimation of an object Euclidean position where the
	uncertainties are not present. As it can be seen in the simulation
	results, especially looking into the magnified plots, the UIO
	performance is better compared to the other approaches due to the
	complete elimination of disturbance effects. This is reported to
	ensure the efficiency of the UIO method in an ideal condition.
	Furthermore, improvement in the common SDRE filter operation by
	utilizing the switching concept in the designed approach is also
	clear specially in $x_1$ and $x_2$ states. Note that although there
	is no uncertainty in the model, the external disturbance results in
	an error in estimations which can be minimized through switching of
	covariance matrix in the proposed filter. Next, to take into account
	the model uncertainties, process and measurement noises along with
	uncertainties in the camera, uncertainty in linear velocity is
	considered, in which it is assumed that the linear velocity of the
	camera, $V_c$ is $80\%$ of its real value. In other words, it is
	assumed that the $V_c$ exact value is not accessible and an
	approximation is taken. Furthermore, noises are considered to keep
	in view the existing vibration in the device in which the camera is
	mounted on. Likewise, the design parameters of the approaches are
	set to be the same as the previous simulation. As it is delineated
	in Fig. \ref{fig5}, the performance of the proposed filter is much
	better than the others. Both variance and mean value of estimation
	error are lower for the designed filter and it is clear that the
	proposed method has gained robust performance due to its switching
	essence.
	
	\begin{figure}[]
		\centering
		\subfloat[]{\includegraphics[width=2in]{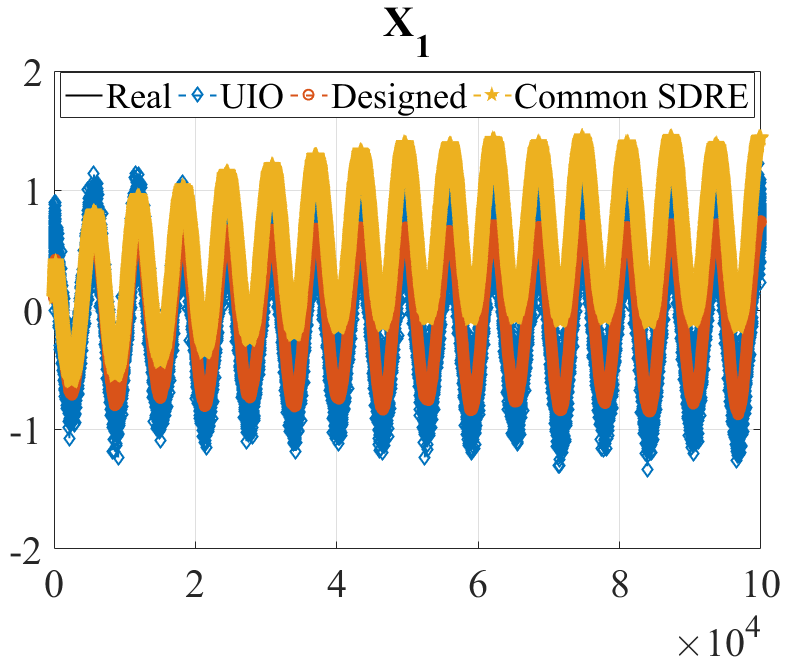}}
		%\hfil
		\subfloat[Magnified]{\includegraphics[width=2in]
			{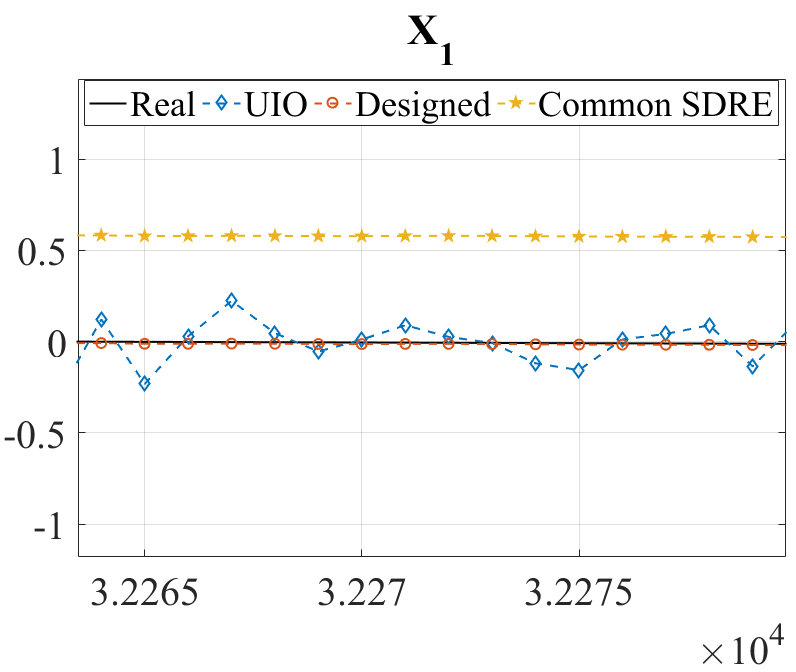}} \\
		\subfloat[]{\includegraphics[width=2in]{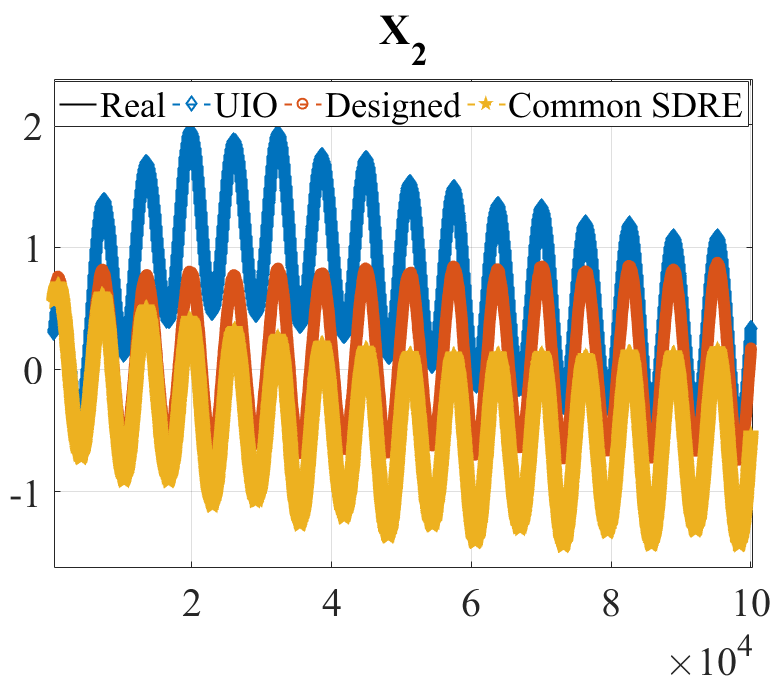}}
		\subfloat[Magnified]{\includegraphics[width=2in]
			{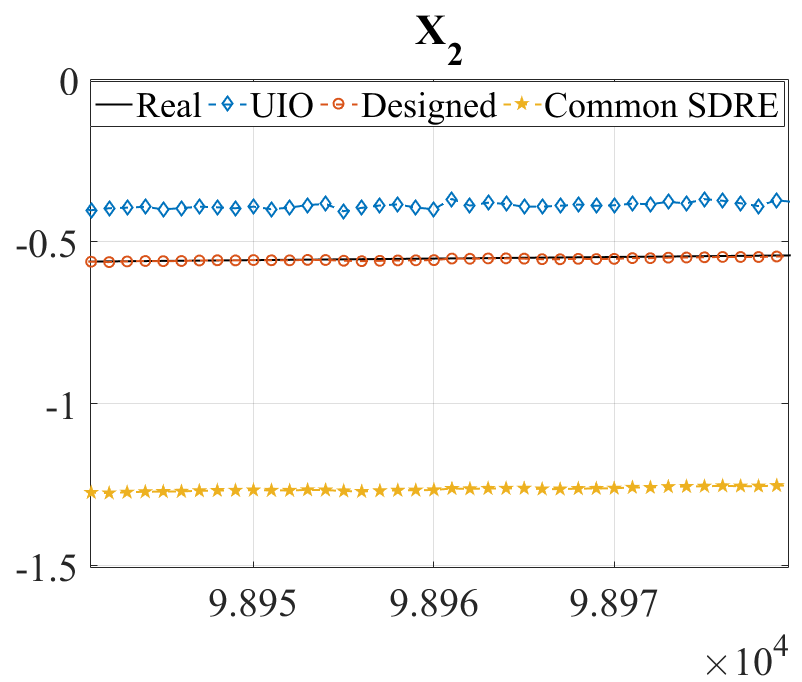}}\\
		\subfloat[]{\includegraphics[width=2.1in]{pics/wo_x3_11_6}}
		\subfloat[Magnified]{\includegraphics[width=2in]
			{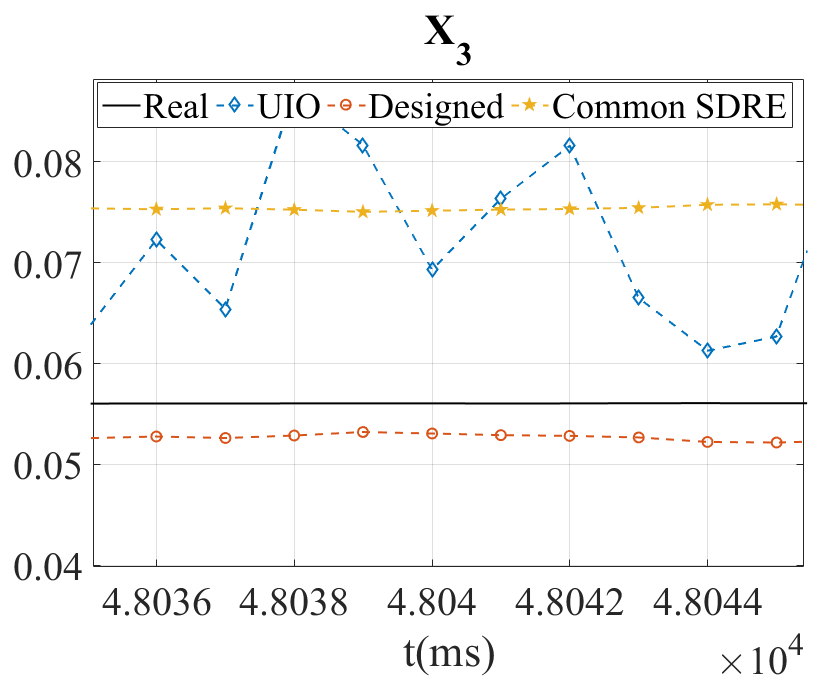}}
		\caption{Estimation results in presence of uncertainties of the model,
			utilizing the same design parameters as in Fig.~\ref{fig4}}
		\label{fig5}
	\end{figure}
	To further investigate the robustness of the suggested approach, a
	Monte-Carlo simulation is performed in which the camera linear
	velocity is assumed to be partially known as an stochastic process
	in a range from $50\%$ to $150\%$ of its real value. To clarify, the
	simulation is accomplished on $1000$ different amounts of $V_c$ in
	the interval of $[0.5V_c,1.5V_c]$ and for each simulation, the
	estimations are obtained through utilizing the three mentioned
	methods for about 10$sec$ with 1$ms$ sampling-rate.
	
	\emph{Remark 14:} To  take $V_c$ as uncertainty, each of its
	elements is considered as a stochastic value independently.
	
	\begin{figure}[]
		\centering
		\subfloat[]{\includegraphics[width=2.2in]{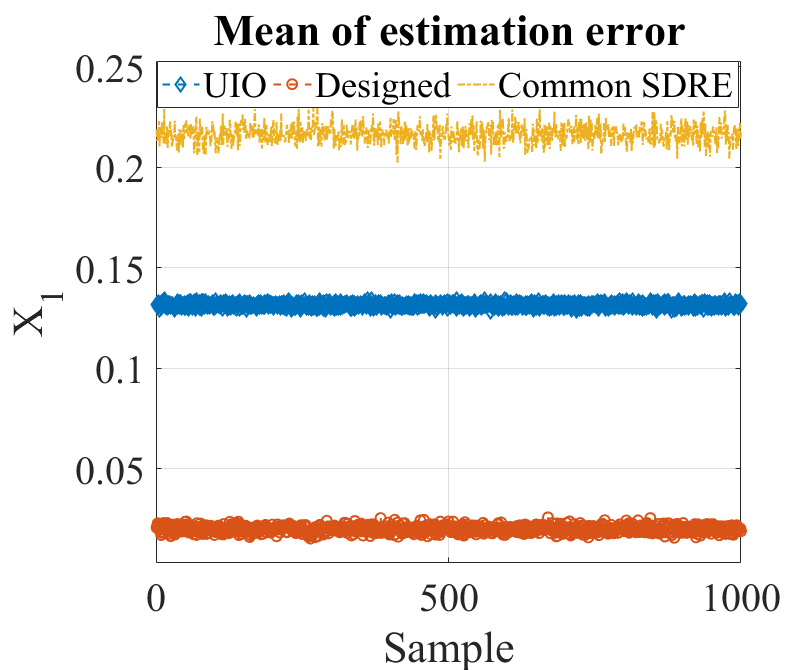}}
		%\hfil
		\subfloat[Magnified]{\includegraphics[width=2.2in]
			{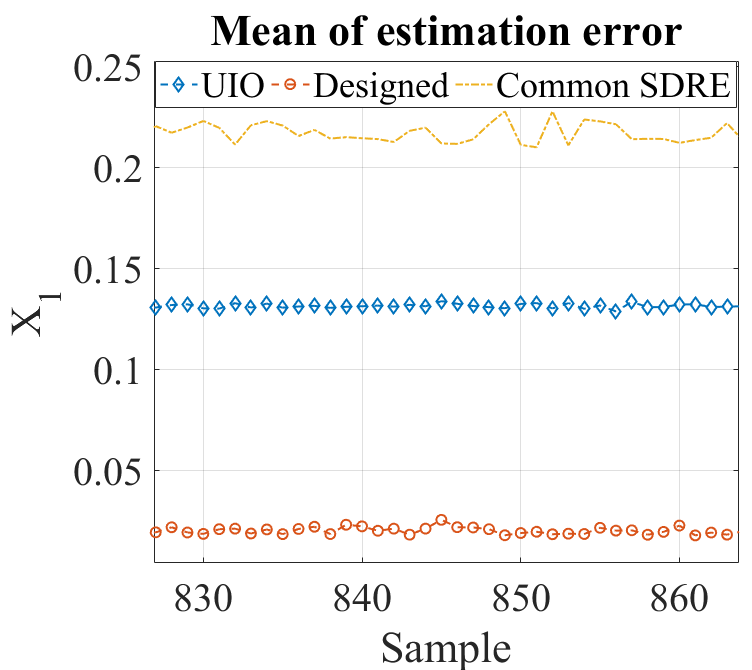}} \\
		\subfloat[]{\includegraphics[width=2.2in]{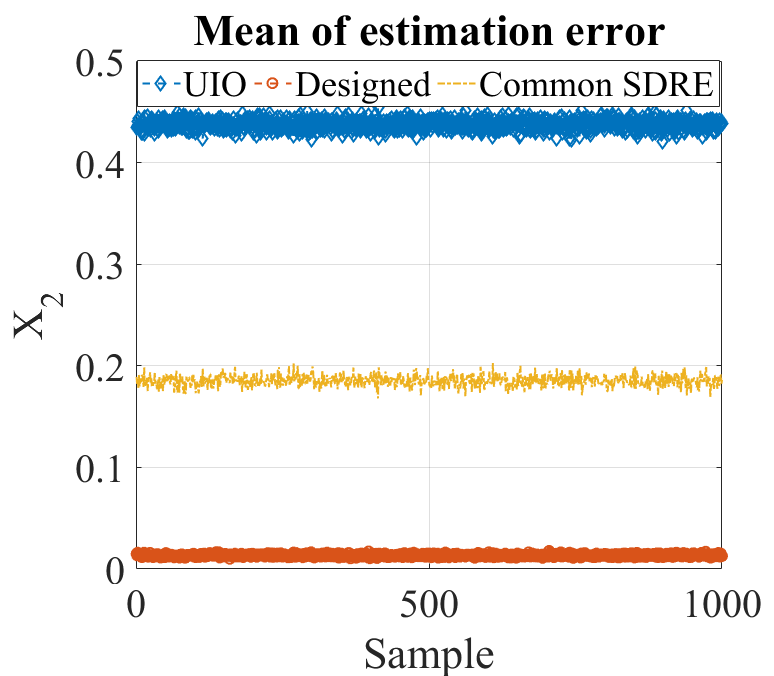}}
		\subfloat[Magnified]{\includegraphics[width=2.2in]
			{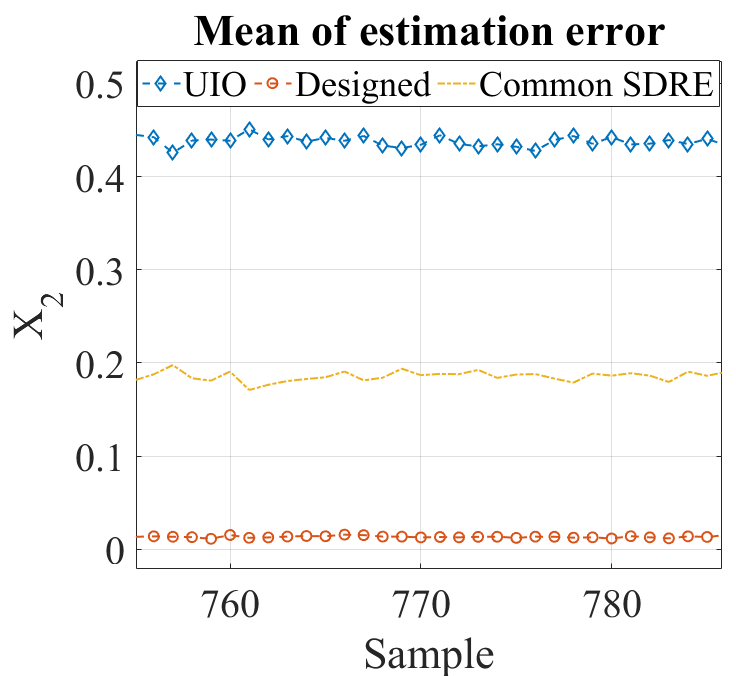}}\\
		\subfloat[]{\includegraphics[width=2.2in]{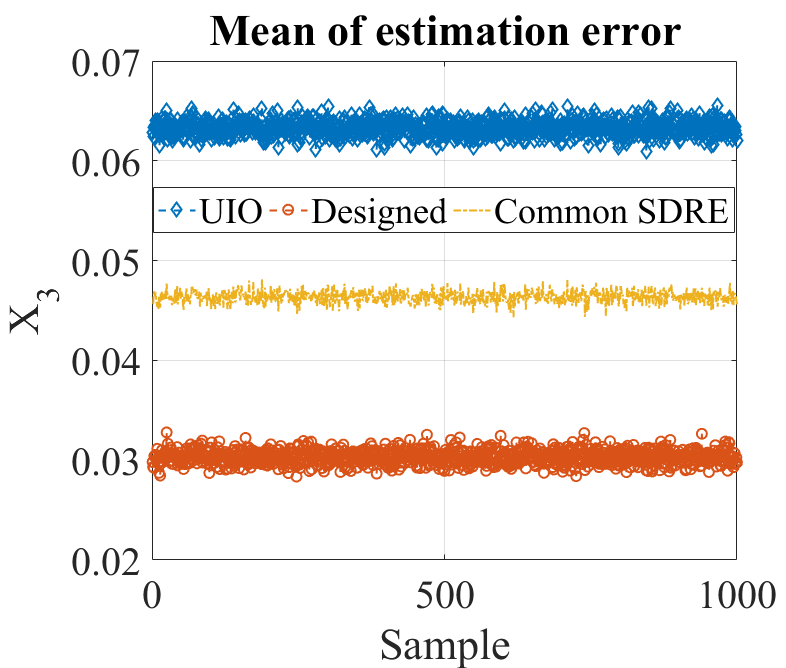}}
		\subfloat[Magnified]{\includegraphics[width=2.2in]
			{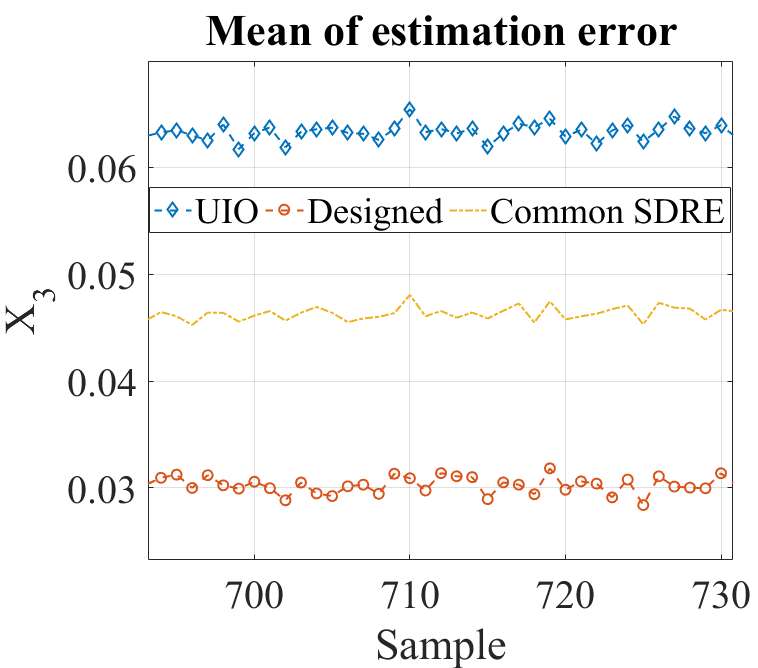}}
		\caption{Mean value of estimation error as a result of Monte-Carlo simulation}
		\label{fig6}
	\end{figure}
	\begin{figure}[]
		\centering
		\subfloat[]{\includegraphics[width=2.2in]{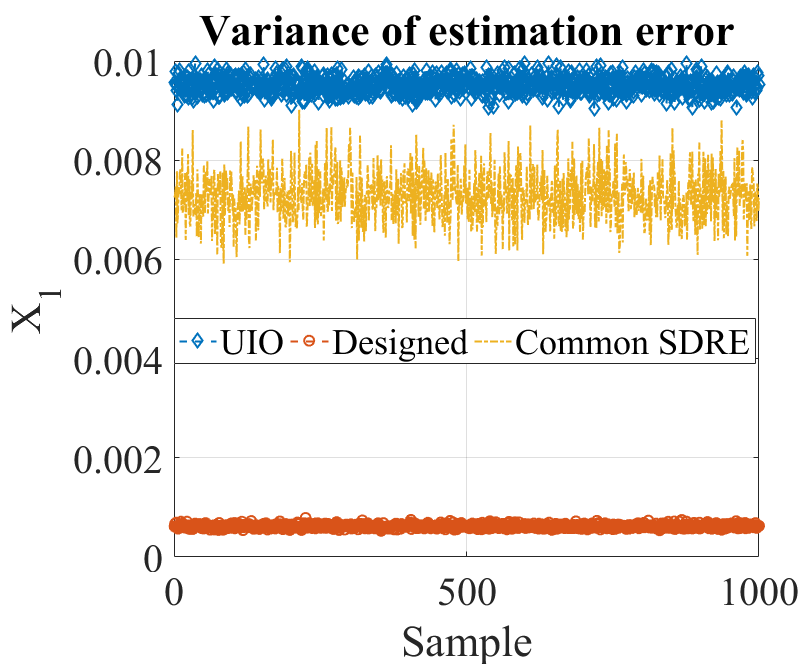}}
		%\hfil
		\subfloat[Magnified]{\includegraphics[width=2in]
			{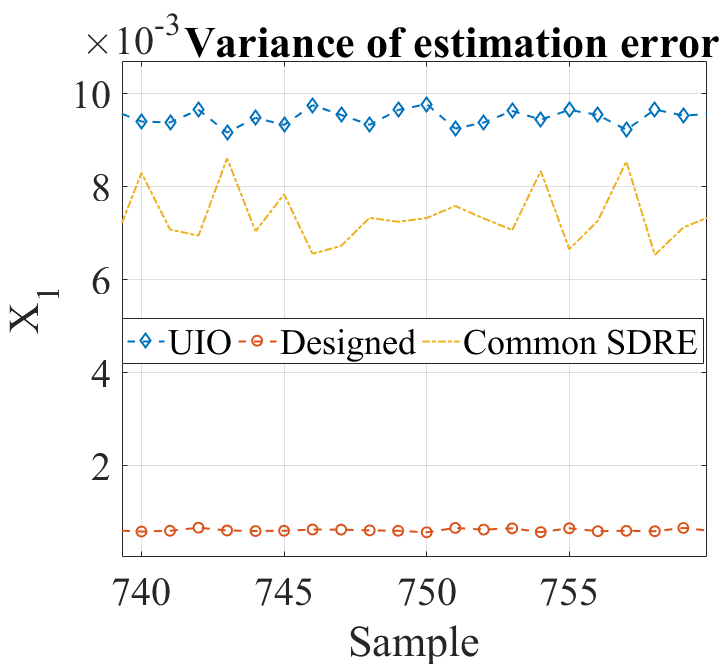}} \\
		\subfloat[]{\includegraphics[width=2.2in]{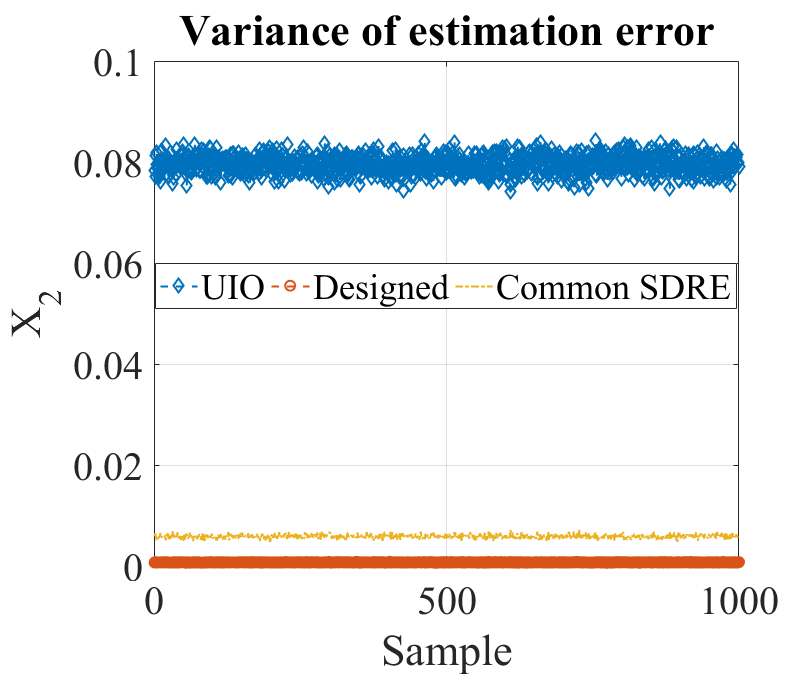}}
		\subfloat[Magnified]{\includegraphics[width=2in]
			{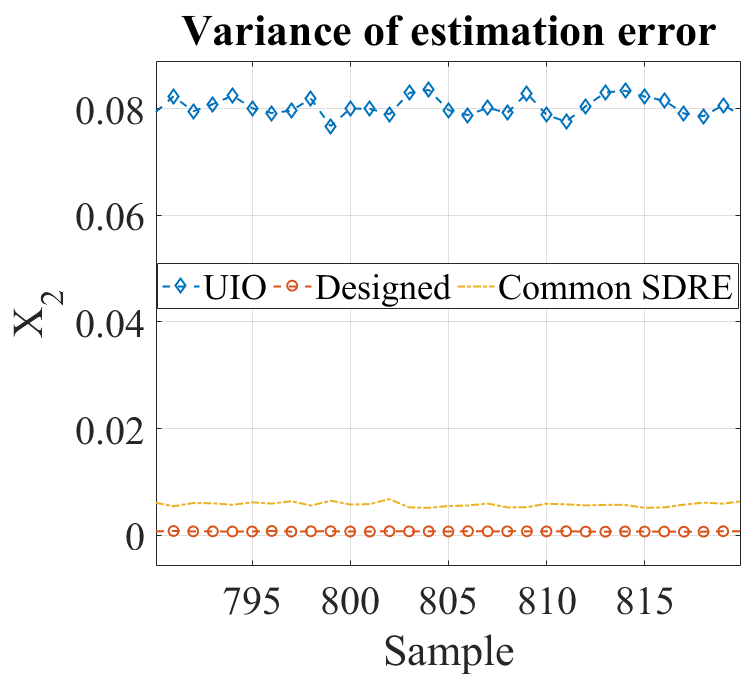}}\\
		\subfloat[]{\includegraphics[width=2.2in]{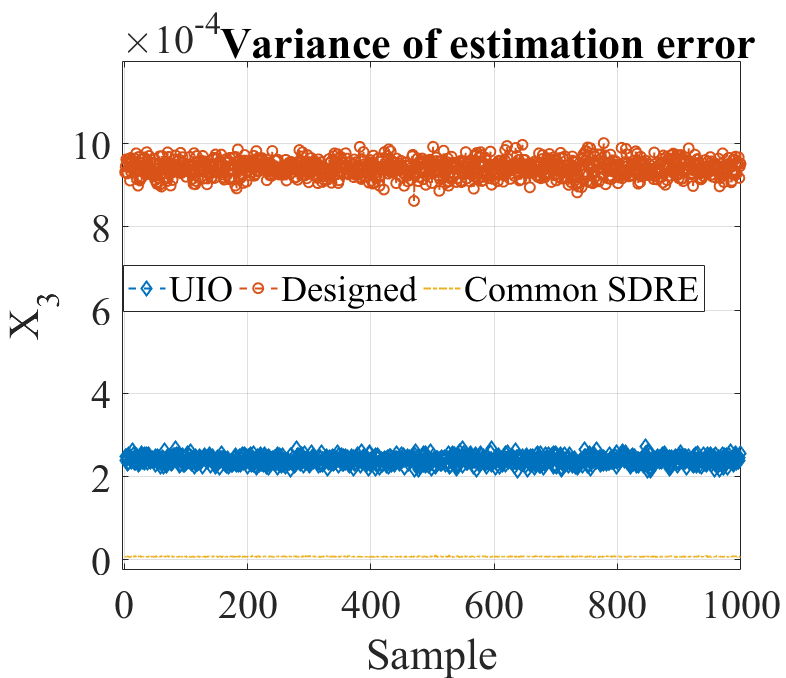}}
		\subfloat[Magnified]{\includegraphics[width=2in]
			{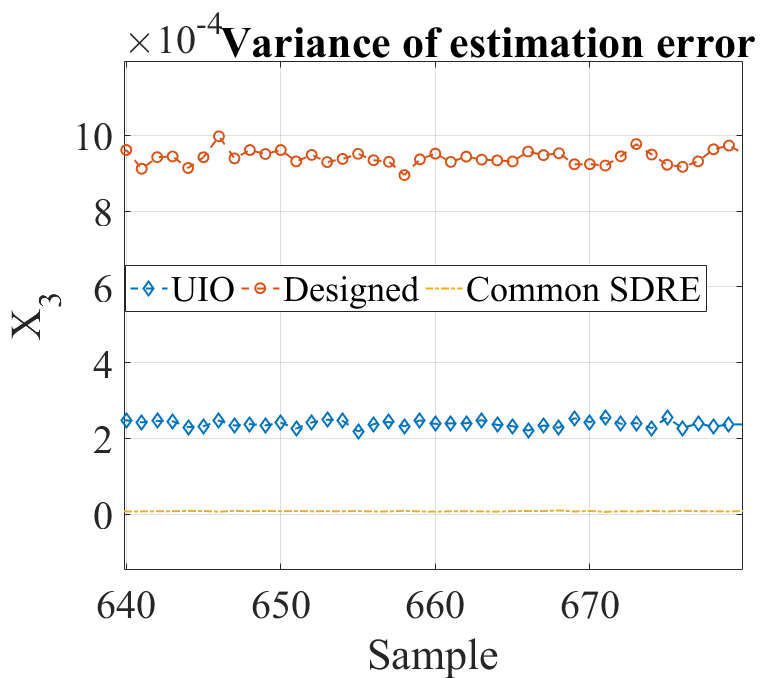}}
		\caption{Variance of the estimation error as a result of Monte-Carlo simulation}
		\label{fig7}
	\end{figure}
	As a result of  the Monte-Carlo simulation, the estimation error
	mean value is shown in Fig. \ref{fig6}. This index is much lower for
	the designed filter in estimating all the three state variables. The
	other important touchstone is the variance of estimation error which
	is given in Fig .\ref{fig7}. It is seen that the estimation error
	variance of the proposed method is much lower than the others in
	other two methods in $x_1$ and $x_2$ estimation. However, this is
	not the case for  the third state $x_3$, but the amount of this
	variance is of order $10^{-4}$ and less important than the other two
	states estimation. Consequently, utilizing switching improves the
	robustness of the common SDRE and yields a robust performance in the
	presence of model uncertainties compared even to UIO filter.
	
	\section{Experimental Results}
	In this section, experimental results are presented in which both
	longitudinal and lateral distances of frontal dynamic objects are
	estimated for a host vehicle. \textcolor{black}{The overall experimental setup is illustrated in Fig. \ref{exp_set}. The camera is located as indicated on the right, while the Jetson Tx2  is seen on the left.}
	\begin{figure}[t]
		\centering
		\includegraphics[width=4in]{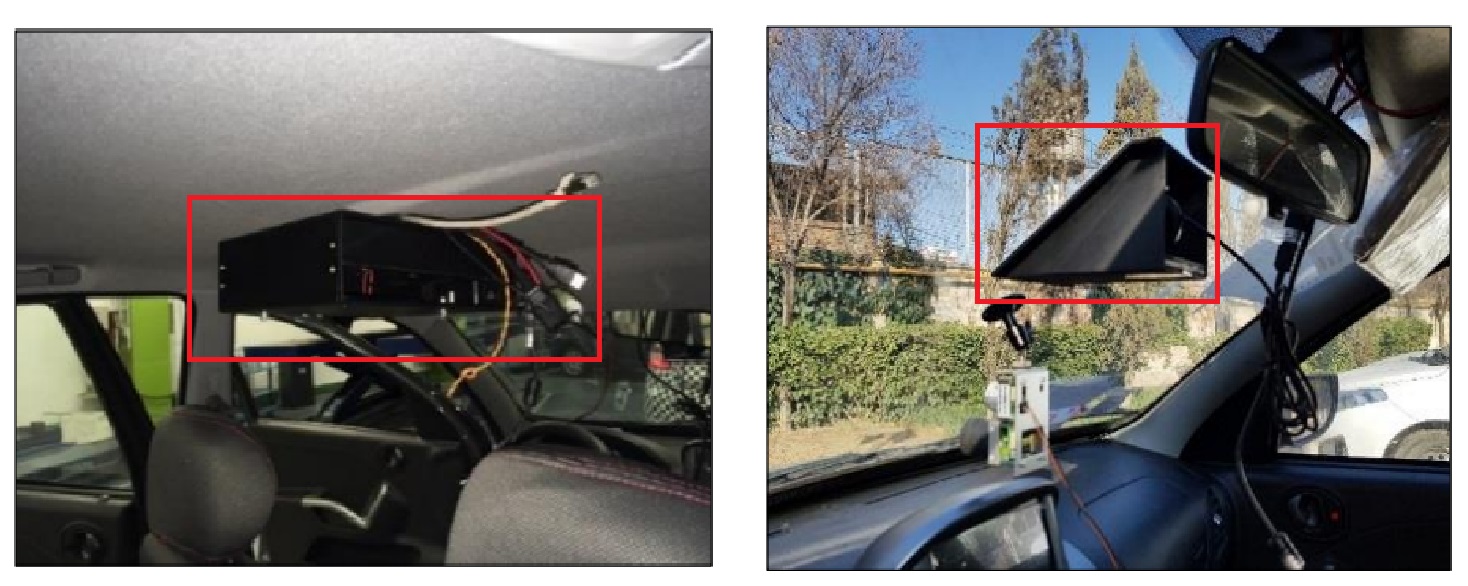}
		\caption{The experimental setup}
		\label{exp_set}
	\end{figure}
	In what follows, first, the concept of employing CNNs
	alongside an OpenCV tracker is presented. Later the mentioned model
	in (\ref{61}) is reduced to a fourth-order state space due to having
	a 2D motion plane. Then the output model developed in (\ref{69}) is
	analyzed for possible employment as an observable system. Finally,
	to verify the proposed algorithm, the radar data is utilized and the
	estimation values are compared to that of the radar measurements. To
	get a proper intuition on the proposed approach, the block diagram
	of the whole system implemented in the experiments is presented in
	Fig. \ref{fig8}. As it is seen in this figure, camera images are
	processed by a robust tracker that contains both YOLO CNN and the
	median flow OpenCV tracker. The outputs of this block are the
	observations that are used in the proposed estimator to estimate
	both longitudinal and lateral distances. In the literature, almost
	all researches reported using a pre-mounted sign on the object in
	performing experiments, although it is a serious limitation for
	these approaches when it comes to the real applications. To tackle
	this problem, in this paper a CNN is utilized to detect all
	potential moving objects through a single image. In addition,
	considering the reported results in~\cite{ourref}, employing CNNs
	beside a Median-flow tracker yields a promising performance in
	tracking objects through image sequences. Consequently, after the
	initial detection and feeding of the tracker, every '$n$' frames the
	YOLO CNN is re-executed to detect new objects and modify the
	tracker, to result in a real-time and robust performance
	implementation. Furthermore, coordinates of the object contact point
	with the ground are utilized as observations and inputs to the
	proposed filter to estimate lateral and longitudinal distances. The
	reduced state space model to be used for the proposed filter is
	presented as:
	\begin{equation}
	\label{70}
	x(k+1) = A(x(k))x(k) + B_1(x(k))V_c(k) + B_2(x(k))\omega(k)
	\end{equation}
	in which $\omega$ and $V_c$ are the angular and linear velocities of
	the camera, respectively, and
	\begin{equation}
	\label{71}
	A(x(k)) = \left[
	\begin{matrix}
	1&-\frac{T}{2}(x_2x_6)_k & \frac{T}{2}(x_3)_k & 0 \\
	0 & 1  &  0 & -\frac{T}{2}(x_3^2)_k \\
	0 & 0 & 1 & 0 \\
	0 & 0 & 0 & 1
	\end{matrix}\right]
	\end{equation}
	\begin{equation}
	\label{72}
	B_{1}(x(k)) = \left[
	\begin{matrix}
	0 & -\frac{T}{2}(x_3)_k & \frac{T}{2}(x_2x_3)_k  \\
	0 & 0 & \frac{T}{2}(x_3^2)_k   \\
	0  & 0 & 0  \\
	0  & 0 & 0
	\end{matrix}\right]
	\end{equation}
	\begin{equation}
	\label{73}
	B_{2}(x(k)) = \left[
	\begin{matrix}
	\frac{T}{2}(1 + x_2^2)_k & -\frac{T}{2}(x_1x_2)_k & -\frac{T}{2}(x_1)_k   \\
	\frac{T}{2}(x_2x_3)_k & -\frac{T}{2}(x_1x_3)_k &0 \\
	0  & 0 & 0 \\
	0  & 0 & 0 \\
	\end{matrix}\right]
	\end{equation}
	while the output model is:
	\begin{equation}
	\label{74}
	y(k) = h(x(k))x(k)
	\end{equation}
	It is noteworthy to mention that here $x(k)$ is equal to
	$[x_2,x_3,x_5,x_6]^T$ where $x_5$ and $x_6$ are $V_{q2}$ and
	$V_{q3}$, respectively. Note that, $x_1$ is taken as a constant
	value in these equations which yields $x_4$ to be zero.
	\begin{figure}[t]
		\centering
		\includegraphics[width=4in]{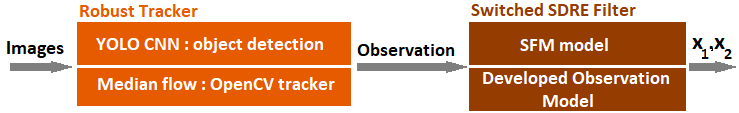}
		\caption{Block diagram of the suggested approach}
		\label{fig8}
	\end{figure}
	
	\emph{Remark 15:} A vital thing for having a practical
	approach is to optimize the framework as much as possible to further
	enhance the performance. Utilizing YOLO at each frame may result in
	better performance. However, the computational cost is far higher
	than that of the proposed algorithm of using YOLO alongside
	medianflow tracker.
	
	Since the camera has some parameters which are used in  (\ref{69}),
	it is imperative to calibrate this observation model before the
	implementation. In this regard, a data set is used which consists of
	both radar and image data. This data set is gathered through a
	scenario in which a car has a sweeping movement. Applying YOLOv3 on
	the images results in the bounding boxes for each image, and this
	may be utilized to attain a function which relates the contact
	points of the objects with the ground, to the frontal distances.
	\begin{figure}[]
		\centering
		\subfloat[]{\includegraphics[width=4in]{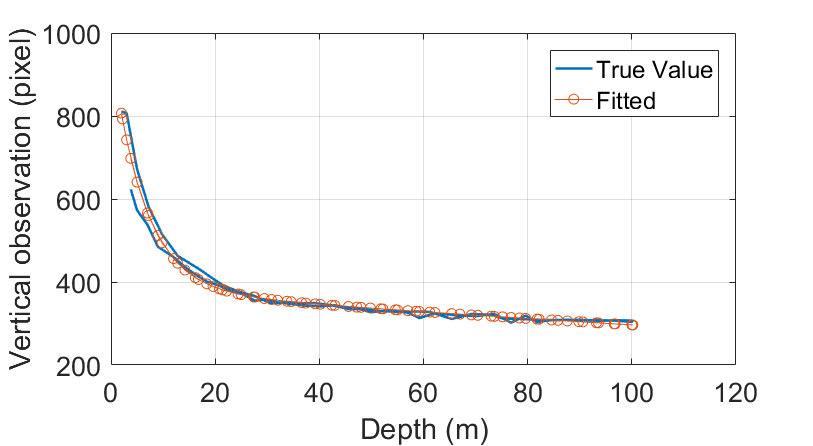}
			\label{fig9_a}}\\
		\subfloat[]{\includegraphics[width=4in]{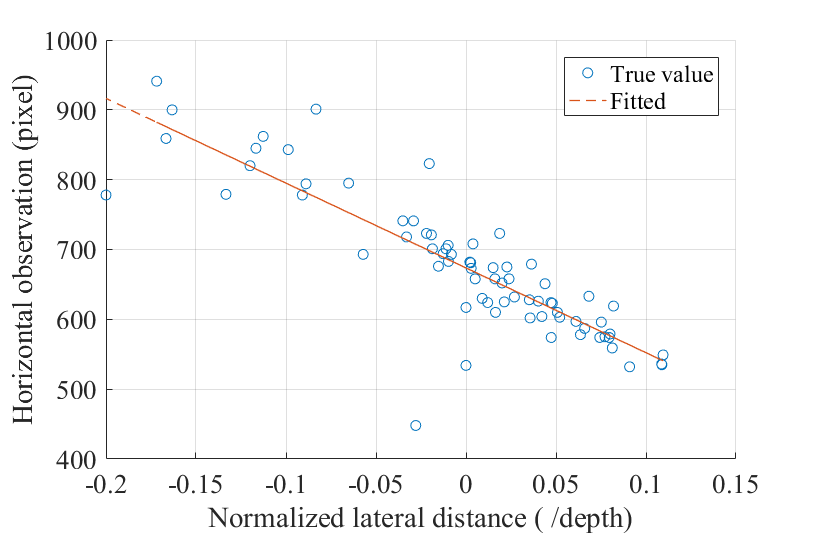}
			\label{fig9_b}}
		\caption{Nonlinear functions derived from plotting vertical and horizontal observations with respect to the true depth and the normalized lateral distance taken from the radar data}
		\label{fig9}
	\end{figure}
	Fig. \ref{fig9_a} indicates the nonlinear behavior of the practical
	vertical observations with respect to the ground truth longitudinal
	distances. As it is seen in this figure, due to the inherent
	nonlinear behavior of these states, a linear model would not be
	sufficient to be used as the observation model. Consequently, an
	alternative is to use multiple linear models considering (\ref{69}),
	or to use a persistent nonlinear model to encapsulate this behavior.
	In this paper we propose using such model for the whole range of
	distances up to $100$ meters. To this end, the following nonlinear
	function is fitted on the ground truth data as shown in Fig.
	\ref{fig9_a}.
	\begin{equation}
	\label{76}
	y_2  = 588 \exp{(-0.161 \frac{1}{x_3})} + 382.6 \exp{(-0.002547 \frac{1}{x_3})}
	\end{equation}
	Using the same approach, Fig. \ref{fig9_b} depicts the relation
	between normalized lateral distance and horizontal observations.
	Since there is no need to use a nonlinear function in here, the
	normalized lateral distance is employed in a linear function to fit
	the empirical observations as follows:
	\begin{equation}
	\label{77}
	y_1  = -1214x_2 + 673.6
	\end{equation}
	Use (\ref{76}), (\ref{77}) into (\ref{74}), to finalize the
	observation model as:
	\begin{equation}
	\label{78}
	h(x(k)) = \left[
	\begin{matrix}
	\frac{y_1}{x_2} & 0& 0& 0  \\
	0 & \frac{y_2}{x_3} & 0& 0
	\end{matrix}\right].
	\end{equation}
	Note that, since $x_2$ goes to zero when both the object and the
	host car are in a same line, it is essential to omit the constant
	term in (\ref{77}), and to consider it as another input to the
	system.
	
	To achieve real-time performance on an Nvidia Jetson Tx2 board, tiny
	YOLO~\cite{ref14} has been used, which yields a limitation in depth
	estimation range. Note that by utilizing YOLOv3, depth estimation is
	valid up to $100$ meters.
	However, in tiny YOLO this range is
	limited to about $40$ meters. In fact, the restriction is applied by
	the CNN structure, while more layers result in the detection of
	farther objects. Albeit both CNNs are employed in this research, the
	results of utilizing YOLOv3 are reported to evaluate the whole
	algorithm for the range of $100$ meters. Moreover, a multi-thread
	framework is designed to extend this algorithm to multi-object
	tracking applications in which there is a main thread responsible
	for producing observations, and for each object a thread is created
	which will be executed for the estimation of the lateral and
	longitudinal distances of that object. By this means the proposed
	method is generalized for any number of the objects in the scene.
	\begin{figure}[]
		\centering
		\subfloat[]{\includegraphics[width=2.5in]{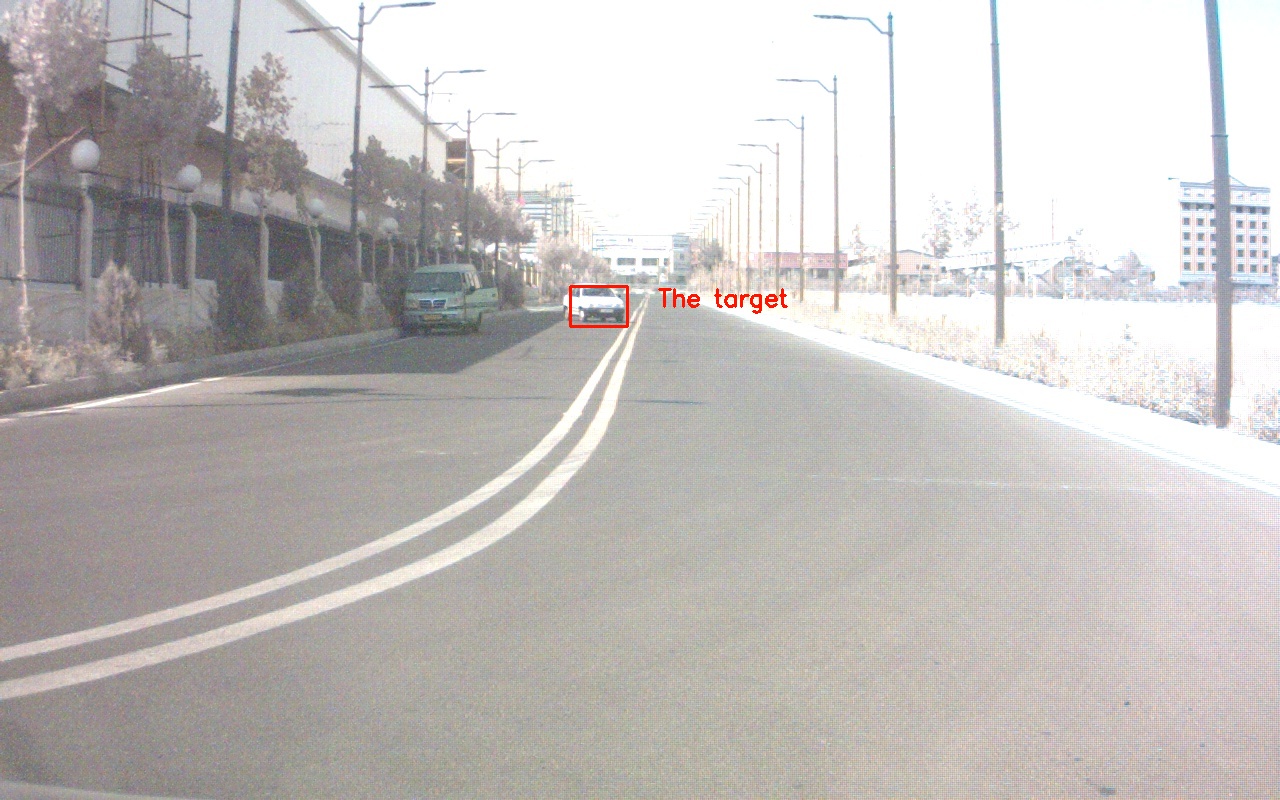}}
		\subfloat[]{\includegraphics[width=2.5in]{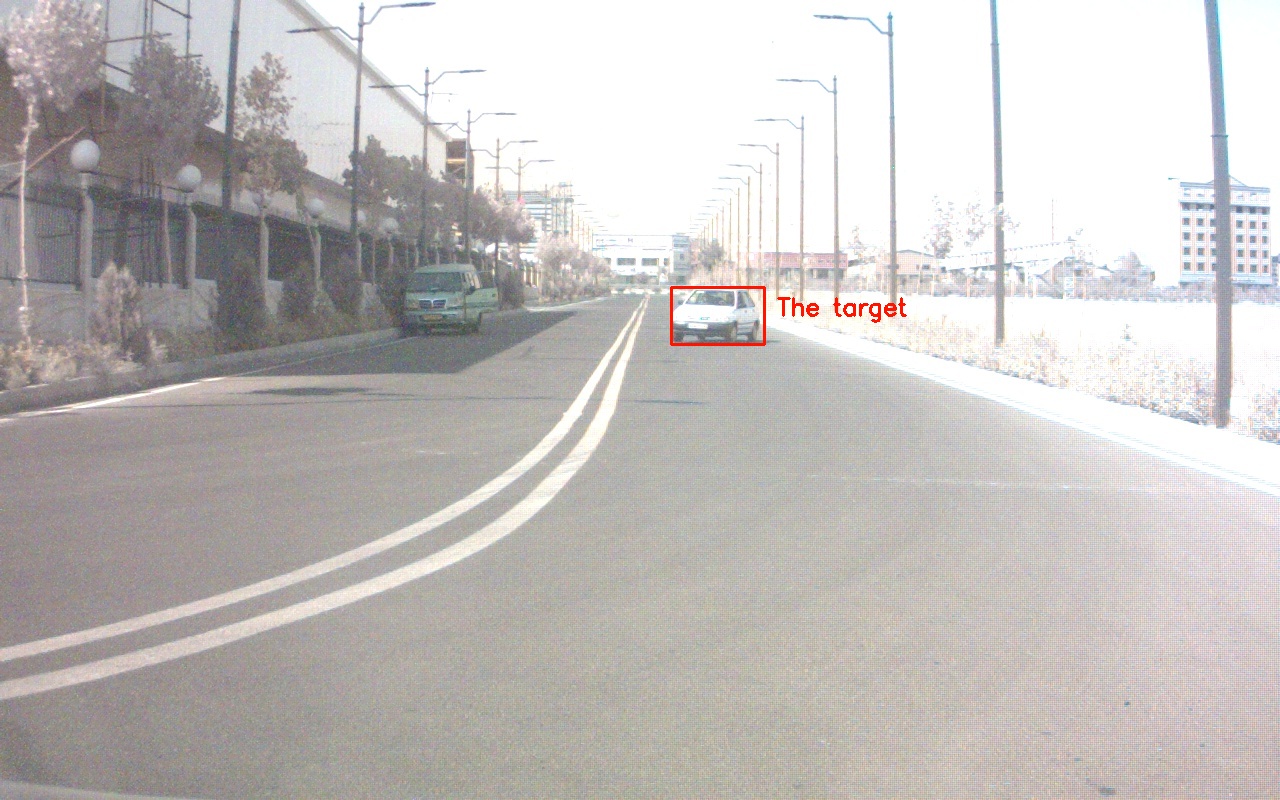}} \\
		\subfloat[]{\includegraphics[width=2.5in]{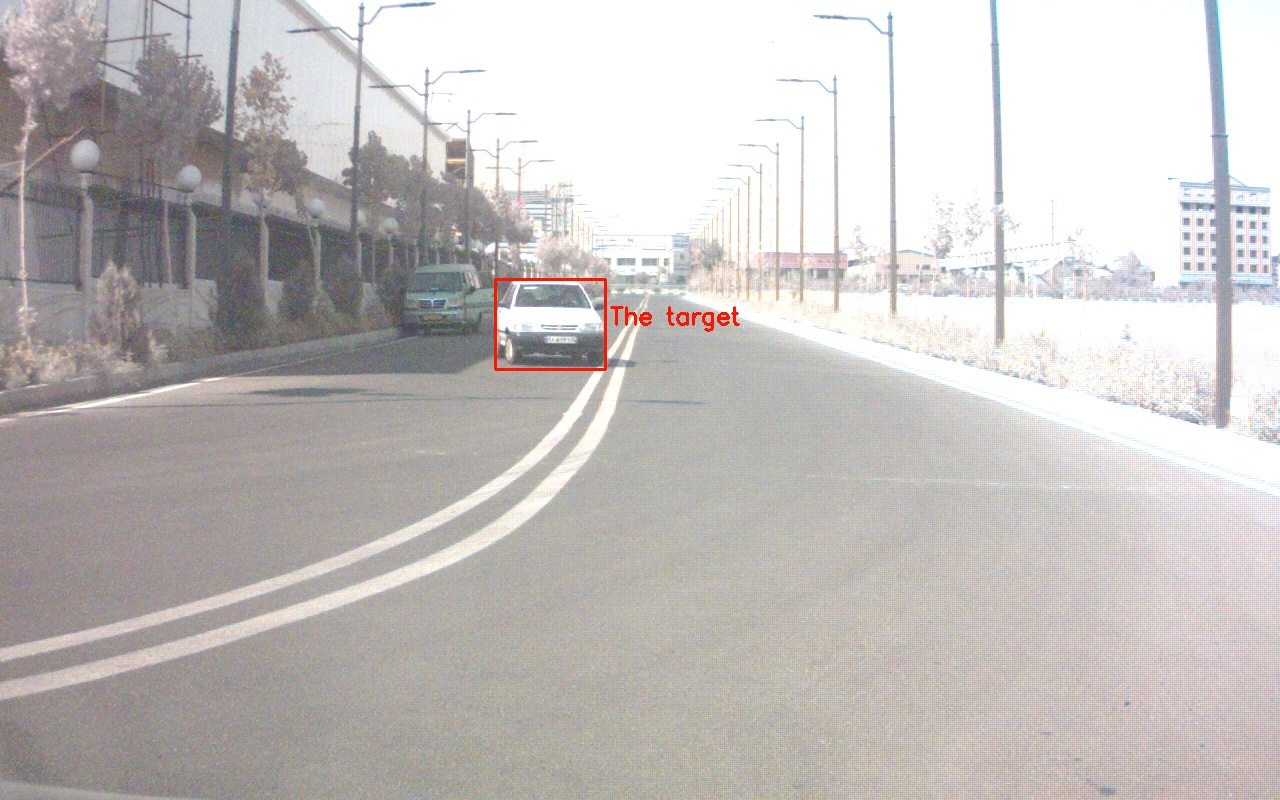}}
		\subfloat[]{\includegraphics[width=2.5in]{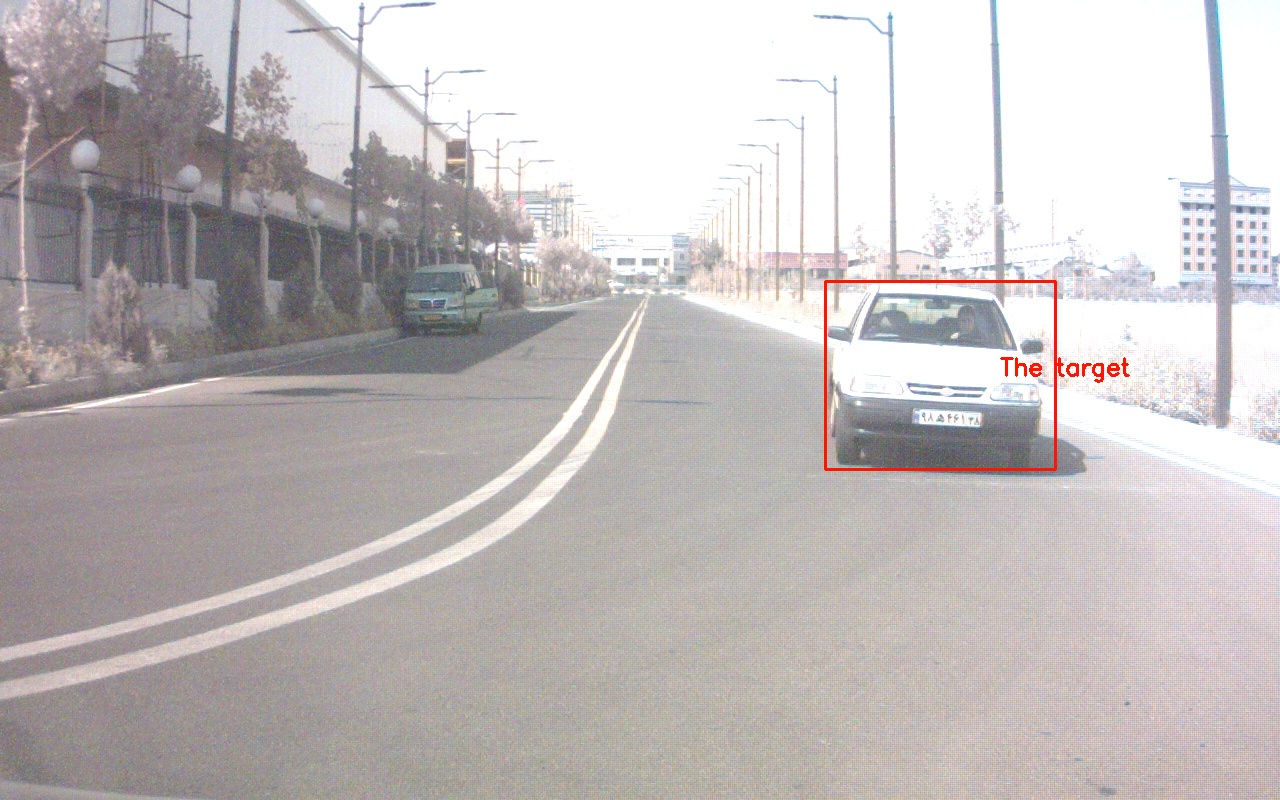}}
		\caption{Sample images of the test scenario}
		\label{fig10}
	\end{figure}
	
	Experiments are conducted on a video sequence utilizing the
	aforementioned model. The video encompasses $34$ frames which $4$ of
	them are reported in Fig. \ref{fig10}. The experiment scenario is
	designed such that a vehicle comes toward the host car with a curly
	maneuver, with the aim to estimate the lateral and longitudinal
	distances of the moving car by using a monocular camera. In fact,
	the scenario represents an exaggerated motion of the frontal car as
	a critical situation in which the driver has lost the vehicle
	control, and it comes to the left lane and in front of the host car.
	\begin{figure}[]
		\centering
		\subfloat[]{\includegraphics[width=4in]{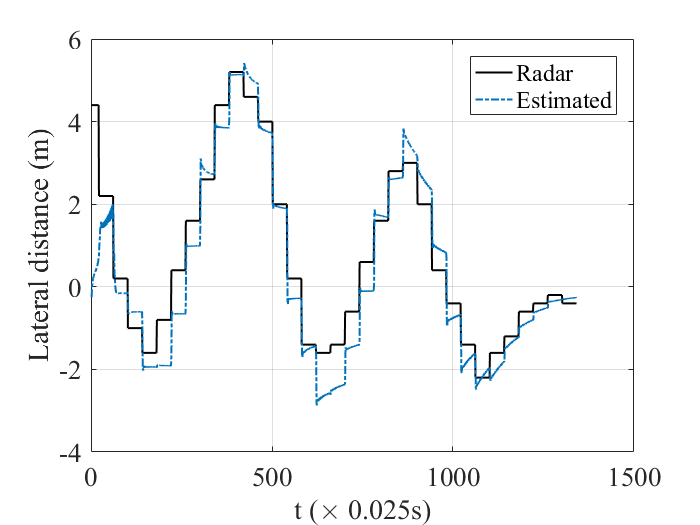}
			\label{fig12_a}}\\
		\subfloat[]{\includegraphics[width=4in]{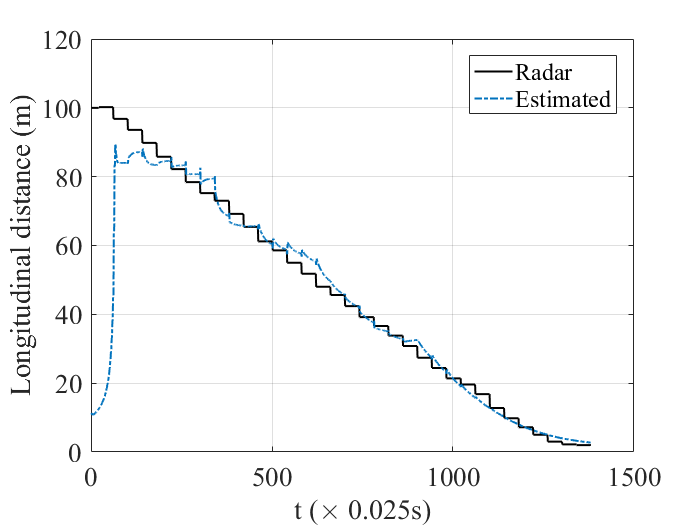}
			\label{fig12_b}}\\
		\caption{Estimated longitudinal and lateral distances }
		\label{fig12}
	\end{figure}
	
	Fig.~\ref{fig12} depicts the proposed results in lateral and
	longitudinal distance estimation, while as mentioned earlier, radar
	data has been used to verify the accuracy of the results. As it can
	be seen, the proposed algorithm suitably determines the movement
	pattern of the object precisely, and furthermore, it estimates the
	absolute value of the lateral distance efficiently. In fact,
	according to this result, the estimation error for the longitudinal
	distance is impressively less than $3\%$ for the range below $50$
	meters, and this increases to  just $5\%$ for the range of $70$
	meters. 
	\textcolor{black}{Compared to the results of the presented machine learning based approaches in the literature \cite{ref_new_4}, the accuracy is at least two times higher while the estimation is performed in a dynamic scene. In fact, this is the first approach presented in the literature which covers the estimation problem of both longitudinal and lateral distances in a tracking scenario, and more importantly, for a practical application.}
	Note that, the camera is not expected to yield very accurate
	results for farther objects, while this causes no problem in
	applications such as obstacle avoidance, since the suggested method
	performs well in the required operation range. 
	\textcolor{black}{Another important point to ponder is, regarding the estimation equations, any error in the longitudinal distance estimation directly effects on the lateral distance one which further demonstrate the applicability of the proposed approach since the results for the lateral distance are promising with almost no drift.} 
	
	\emph{Remark 16:} The main thread which produces  the observations, is much
	slower compared to the other threads. Thus, results in Fig.
	\ref{fig12} are prone to be in a zero-order hold (ZOH) form.
	
	\emph{Remark 17:} The framework has been
	implemented such that for each object in the image, a distinct
	thread is set. Each of these threads consists of an SFM module with
	about $1ms$ computation time. As a result, the proposed algorithm
	performs in real-time, and it is applicable in real-world
	applications. 
	
	\section{Conclusions}
	In this article, a framework is suggested to detect and track
	frontal dynamic objects in an autonomous vehicle motion plane.
	First, a switched SDRE filter is proposed as an effective method to
	solve the general form of the SFM problem in the presence of
	uncertainties, compared to that of commonly used UIO method. To
	analyze the robustness of the proposed approach, a Monte Carlo
	simulation is performed and a comparative study on three filters
	reveals the effectiveness of the proposed method. By utilizing a
	newly developed observation model alongside the SFM, an observable
	model for an autonomous vehicle in motion plane is derived.
	Moreover, to obtain observations from a monocular camera, CNNs are
	employed beside the Median-flow tracker. Reported simulation results
	verify the theoretical development of the proposed filter. As a
	result, the Monte Carlo simulation indicates the superiority of the
	proposed switched SDRE filter among both the commonly used UIO
	approach and the common SDRE. Considering both the mean and the
	variance of the estimation error, the suggested estimator has lower
	indexes in the presence of uncertainties. In order to have a
	real-time implementation in practice, a multi-thread framework is
	implemented. Experimental results show a promising performance in
	frontal distance estimation. Our future research is focused on the
	expansion of the experiments to the cases with intermittent
	observations, in which recurrent neural networks may assist for
	better estimation of the frontal object positions.
	
	\section*{Acknowledgement}
	Authors are thankful to professor Azadi and his team especially Parisa Masnadi from SAIPA automotive company for their supportive help to accomplish the experiments. 
	
	\section*{References}
	
	\bibliography{Refarticle}
	
\end{document}